\documentclass[lettersize,journal]{IEEEtran}
\usepackage{amsmath,amssymb,amsfonts,amsthm}
\usepackage{algorithm}
\usepackage{algpseudocode}
\usepackage{amsmath}
\usepackage{array}
\usepackage[caption=false,font=footnotesize,labelfont=rm,textfont=rm]{subfig}
\usepackage{textcomp}
\usepackage{stfloats}
\usepackage{url}
\usepackage{verbatim}
\usepackage{graphicx}
\usepackage{cite}
\usepackage{xcolor}
\usepackage{hhline}
\usepackage{booktabs}
\usepackage{tabularx}
\usepackage{float}
\usepackage{amsmath}
\usepackage{multirow}
\usepackage{makecell}
\usepackage{cite}
\usepackage{colortbl}
\definecolor{lightgray}{gray}{0.95}

\newcolumntype{C}{>{\centering\arraybackslash}X}
\newcolumntype{L}{>{\raggedright\arraybackslash}X}
\newcolumntype{R}{>{\raggedleft\arraybackslash}X}

\theoremstyle{plain}
\newtheorem{theorem}{Theorem}
\newtheorem{proposition}{Proposition}
\newtheorem{lemma}{Lemma}
\newtheorem{corollary}{Corollary}
\theoremstyle{definition}
\newtheorem{definition}{Definition}
\theoremstyle{remark}
\newtheorem{remark}{Remark}

\usepackage{hyperref}
\newcommand{\Lobs}{L_{\mathrm{obs}}}
\newcommand{\Lmax}{L_{\mathrm{max}}}
\newcommand{\Lin}{L_{\mathrm{in}}}
\newcommand{\Lout}{L_{\mathrm{out}}}
\newcommand{\xobs}{x_{\mathrm{obs}}}
\newcommand{\Gam}{\mathrm{Gamma}}
\newcommand{\EE}{\mathbb{E}}
\newcommand{\RR}{\mathbb{R}}

\begin{document}
	
	\title{$\gamma$-Bridge: A Look-Parametric Diffusion Bridge}
	
	\author{Xuran Hu,
		Yujie Zhu, 
		Tengxi Wang, 
		Jilong Li, 
		and Wufan Zhao,~\IEEEmembership{Member,~IEEE}
		\thanks{Xuran Hu, Tengxi Wang, and Wufan Zhao are with the Urban Governance and Design Thrust, Society Hub, The Hong Kong University of Science and Technology (Guangzhou), Guangzhou, China (e-mail: XuRanHu@stu.xidian.edu.cn; twang744@connect.hkust-gz.edu.cn; wufanzhao@hkust-gz.edu.cn).}
		\thanks{Yujie Zhu is with Faculty of Science and Engineering, Macquarie University, NSW 2109, Sydney, Australia. (email:  yujie.zhu2@students.mq.edu.au).}
		\thanks{Jilong Li is with School of Geography and Planning, Ningxia University, Yinchuan, China (email: lijilong@nxu.edu.cn)}
		\thanks{Corresponding author: Wufan Zhao}}
	
	\IEEEpubid{}
	
	\maketitle

	\begin{abstract}
		Multiplicative Gamma noise is a signal-dependent degradation in coherent imaging; synthetic aperture radar (SAR) despeckling is its most prominent real-world instance. Existing diffusion denoisers parameterize their forward process by abstract signal-to-noise schedules rather than by the physical look number $L$, so different deployment scenarios typically require separately trained models, and transfer from synthetic Gamma training to real SAR remains challenging without clean ground truth. We introduce $\gamma$-Bridge, a look-parametric bridge whose schedule $L(t)$ connects the noisy observation at $\Lobs$ to the clean limit through exact multiplicative Gamma marginals. Its closed-form Gamma--L\'evy reverse posterior admits both stochastic and deterministic processes, while observation conditioning and a two-step consistency loss stabilize multi-step inference in the low-SNR single-look regime. Because bridge time directly represents $L$, one conditioned network can smart-start from any admissible input look and stop at a target look number. These two orthogonal controls enable zero-shot restoration over the full admissible grid after training only at $\Lobs=1$ on natural images with synthetic Gamma corruption. Combined with a homogeneous-patch look estimator, $\gamma$-Bridge processes data from six spaceborne and airborne SAR sensors without sensor-specific fine-tuning, achieving leading results on standard synthetic benchmarks while providing physically interpretable input and output controls absent from prior denoisers. Codes are released \href{https://github.com/Teriri1999/GammaBridge}{here}.
	\end{abstract}
	
	\begin{IEEEkeywords}
		Gamma denoising, SAR despeckling, diffusion bridges, multiplicative noise, image restoration.
	\end{IEEEkeywords}
	
	\section{Introduction}
	
	\IEEEPARstart{M}{ultiplicative} Gamma noise is a signal-dependent degradation encountered in coherent imaging systems, where it can obscure fine structures and radiometric information. Unlike additive Gaussian noise, its multiplicative and signal-dependent nature makes restoration a distinct image-processing problem. Synthetic aperture radar (SAR) despeckling is a major real-world instance and our primary validation setting. A SAR intensity image is commonly modeled as $Y=X\cdot N$, where $X$ denotes the underlying radar reflectivity and $N$ is unit-mean Gamma noise with variance $1/L$. The equivalent number of looks $L$ determines the noise strength; $L=1$ corresponds to intensity formed from single-look complex (SLC) data at approximately $0$~dB per-pixel SNR, while multi-look averaging or repeat-pass compositing increases $L$, producing cleaner observations at the cost of spatial or temporal resolution.
	
	Gamma image restoration has been studied using both model-based and learning-based approaches. Classical log-domain optimization and fractional-diffusion methods suppress Gamma noise while preserving image structures~\cite{Jose2010,DTFAD,GAO20241}. Diffusion-based methods include SAR-DDPM~\cite{10109106}, SDDPM~\cite{guha2023sddpm}, and S$^4$DM~\cite{heo2026self}, which employ Gaussian or transformed-domain diffusion processes. DDGM~\cite{nachmani2021denoising} introduces additive Gamma noise, while \cite{xie2023diffusion} formulate Gamma denoising through Beta-thinning. Recent studies unwind speckle with pretrained Gaussian denoisers~\cite{Lu2025DiSpeckle}, or offer adjustable diffusion-equation smoothing~\cite{Ran2026Tunable}. However, their diffusion coordinates or control variables are not explicitly exposed as an inference-time axis parameterized by the physical look number. Existing methods generally assume a fixed restoration endpoint or adjust generic smoothing strength, and therefore cannot independently specify the input and output Gamma noise levels through $(\Lin,\Lout)$ at inference time.
	\IEEEpubidadjcol
	A further difficulty is that real SAR scenes lack paired speckled and speckle-free ground truth. Supervised learning with synthetic Gamma corruption provides exact targets~\cite{sarcnn,idcnn}, but may suffer from a gap between idealized noise and sensor- or processing-dependent real SAR statistics~\cite{Bo2025SDUD}. Semi- and self-supervised methods reduce the need for clean references, yet representative approaches rely on multi-temporal observations, complex-valued measurements, or blind-spot independence assumptions~\cite{sar2sar,molini2021speckle2void}. Per-image zero-shot methods avoid external training data but require image-specific optimization and still return a fixed restoration endpoint~\cite{Ulyanov_2018_CVPR,albisani2025self}. Thus, label-free inference alone does not establish zero-shot transfer from controlled Gamma training to heterogeneous real SAR sensors without fine-tuning. Since clean real-SAR references remain unavailable, such transfer must also be assessed through distributional and no-reference evidence.
	
	This work introduces $\gamma$-Bridge, a look-parametric bridge for general Gamma image restoration, with SAR as its primary real-world application and validation setting. Building on the Gamma-marginal construction of~\cite{xie2023diffusion} and image-to-image diffusion bridges~\cite{liu2023}, this work defines a look-number schedule satisfying $L(0)=\Lmax$ and $L(T{-}1)=\Lobs$. Each intermediate state follows
	$x_t\mid x_0\sim x_0\cdot\Gam(L(t),L(t))$
	and therefore remains consistent with the standard $L(t)$-look Gamma observation model. The bridge time has a direct physical interpretation, and a single $L$-conditioned network learns the complete range of noise levels. Unlike methods with a fixed endpoint, $\gamma$-Bridge independently controls $\Lin$ and $\Lout$ at inference. For real deployment, an estimated effective look number anchors smart-start at the matching bridge state, enabling zero-shot application across multiple SAR sensors without speckle-free references or sensor-specific fine-tuning.
	
	The main contributions are summarized as follows.
	
	\begin{itemize}
		
		\item A look-parametric diffusion bridge with exact Gamma marginals. We formulate multi-look Gamma restoration as a bridge whose time axis is the physical look number $L$ rather than an abstract noise schedule. The Gamma--L\'evy decomposition yields a closed-form reverse posterior in both stochastic and deterministic forms.
		
		\item Two orthogonal look-number controls at inference. Target-$L$ output control terminates the reverse chain at any $\Lout$, and smart-start input control anchors an observation with look number $\Lin$ to the matching bridge state. A single model trained at $\Lobs=1$ thus performs zero-shot restoration over the full admissible $(\Lin,\Lout)$ grid. Observation conditioning and a two-step consistency loss further stabilize multi-step inference in the low-SNR single-look regime.
		
		\item Zero-shot transfer to real SAR with distributional validation. Trained only on natural images with synthetic Gamma corruption, $\gamma$-Bridge combines a homogeneous-patch $\hat L$ estimator with smart-start to process data from six spaceborne and airborne SAR sensors without sensor-specific fine-tuning, achieving leading results on standard synthetic benchmarks. Beta-coupled ratio statistics further validate the fidelity of the intermediate Gamma marginals at every reverse step.
		
	\end{itemize}
	
	\section{Related Work}
	\label{sec:related}

	\subsection{Diffusion Models and Non-Gaussian Gamma Processes}
	\label{sec:related-gamma-diff}
	SAR-DDPM~\cite{10109106} uses conditional Gaussian diffusion; SDDPM~\cite{guha2023sddpm} derives Gaussian forward and reverse processes under multiplicative corruption; S$^4$DM~\cite{heo2026self} performs self-supervised score estimation after a log-domain transformation; and DA-PhysDiff~\cite{Cai2026DAPhysDiff} introduces a physics-informed LogGamma stochastic differential equation. None of these formulations exposes $(\Lin,\Lout)$ as independently settable physical endpoint parameters.
	
	Non-Gaussian diffusion can better reflect image-formation statistics. DDGM~\cite{nachmani2021denoising} adds centered Gamma residual noise to a generative diffusion rather than modeling multiplicative speckle directly, while star-shaped DDPMs~\cite{Okhotin2023SSDDPM} accommodate exponential-family variables, including Beta and Wishart distributions. Xie \emph{et al.}~\cite{xie2023diffusion} treat Gaussian, Gamma, and Poisson degradations; their Gamma chain uses shape parameters $\alpha_0{=}\infty>\alpha_1>\cdots>\alpha_N$ and the Beta-thinning transition $x_{t+1}=(\alpha_t/\alpha_{t+1})\zeta_{t+1}x_t$, with $\zeta_{t+1}\sim\operatorname{Beta}(\alpha_{t+1},\alpha_t-\alpha_{t+1})$. This construction parallels the independent-Gamma ratio identity underlying our Gamma--L\'evy decomposition.
	
	The key distinction from Xie \emph{et al.} is twofold. First, we replace the abstract shape parameter $\alpha_t$ with the physical look number $L(t)$, so bridge time acquires a direct radar-imaging interpretation and becomes an admissible inference-time control axis. Second, we cast the process as a bridge terminating at the observation $x_{\text{obs}}$ rather than a generative chain terminating at a fixed prior, which admits smart-start input control at any $\Lin\in[\Lobs,\Lmax]$. Observation conditioning and two-step consistency regularize reversal at $\Lobs=1$, and Beta-coupled ratio statistics assess the fidelity of intermediate Gamma marginals.
	
	\subsection{Diffusion Bridges and Trajectory Consistency}
	I$^2$SB~\cite{liu2023} uses degraded images as informative endpoints; Denoising Diffusion Bridge Models~\cite{ICLR2024_20e45668} learn endpoint-conditioned score transport; and UNSB~\cite{ICLR2024_54912807} handles unpaired translation. DBIM~\cite{Zheng2025DBIM} accelerates bridge sampling, whereas RDBM~\cite{wang2026residual} spatially adapts restoration through residual-modulated perturbations. All of these formulations rely on Gaussian or Brownian perturbations, so their forward marginals are not physically meaningful outside the target endpoint; $\gamma$-Bridge instead preserves exact Gamma marginals at every step, and every intermediate state is a physically valid $L(t)$-look image.
	
	Consistency Models target one- or few-step generation, Consistency Trajectory Models~\cite{Kim2024CTM} map arbitrary times along a probability-flow ODE, and GCTMs~\cite{Kim2025GCTM} extend such maps to arbitrary distribution pairs and image restoration. Our two-step consistency plays a different role: it is a trajectory-level regularizer that provides a training signal $\mathcal{L}_{\text{rec}}$ cannot---at the pointwise $\ell_1$ optimum, $\nabla_\theta\mathcal{L}_{\text{rec}}=0$ while $\mathcal{L}_{\text{cons}}$ retains a non-trivial gradient along the reverse chain.

	\subsection{Model-Based and Learning-Based SAR Despeckling}
	Classical SAR despeckling includes local statistical filters~\cite{Lee1980,Frost1982}, nonlocal methods based on probabilistic patch similarity~\cite{Deledalle2009PPB,Parrilli2012,Deledalle2015}, and variational approaches using constrained optimization or variance stabilization~\cite{Woo2012Speckle,Mulog2017}.
	
	Learning-based methods include supervised networks trained with synthetic speckle~\cite{sarcnn,idcnn} and semi- or self-supervised methods that avoid clean SAR references~\cite{sar2sar,dalsasso2021if,molini2021speckle2void}. More recent approaches learn real-speckle distributions from unpaired data~\cite{Bo2025SDUD} or combine noise estimation with Transformer-based refinement~\cite{9884596}. In all these methods the training objective fixes a single target noise level, and the physical look number $L$ never appears as an inference-time variable; consequently the input and output speckle levels cannot be adjusted independently at test time.

	\section{Method}
	\label{sec:method}
	
	
	\subsection{Preliminaries}
	\label{sec:method-prelim}
	We adopt the shape--rate parameterization of the Gamma distribution throughout: $G \sim \Gam(\alpha, \beta)$ means
	\begin{equation}
		p_G(g) \;=\; \frac{\beta^{\alpha}}{\Gamma(\alpha)}\,
		g^{\alpha-1}\, e^{-\beta g},\qquad g > 0,
		\label{eq:gamma-pdf}
	\end{equation}
	where $\Gamma(\cdot)$ denotes the Gamma function, and $\EE[G] = \alpha/\beta$, $\mathrm{Var}(G) = \alpha/\beta^{2}$. Two elementary properties underpin the Gamma--L\'evy decomposition in Section~\ref{sec:method-levy}.
	
	\begin{lemma}[Scaling]\label{lem:gamma-scale}
		For any $c>0$, if $G \sim \Gam(\alpha, \beta)$ then $c\,G \sim \Gam(\alpha, \beta/c)$.
	\end{lemma}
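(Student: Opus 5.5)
The plan is a direct change of variables in the density \eqref{eq:gamma-pdf}. Fix $c>0$ and let $H = cG$. The map $g \mapsto h = cg$ is a smooth bijection of $(0,\infty)$ onto itself with inverse $g = h/c$ and Jacobian $\mathrm{d}g/\mathrm{d}h = 1/c$. By the one-dimensional transformation formula, $H$ therefore has density
\begin{equation*}
p_H(h) = p_G(h/c)\,\frac{1}{c}, \qquad h>0 .
\end{equation*}

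Next, substitute \eqref{eq:gamma-pdf} into this expression and regroup the factors involving $c$:
\begin{equation*}
p_H(h) = \frac{\beta^{\alpha}}{\Gamma(\alpha)}\Big(\frac{h}{c}\Big)^{\alpha-1} e^{-\beta h/c}\,\frac{1}{c} = \frac{(\beta/c)^{\alpha}}{\Gamma(\alpha)}\, h^{\alpha-1} e^{-(\beta/c) h}.
\end{equation*}
The powers of $c$ combine as $c^{-(\alpha-1)}\cdot c^{-1} = c^{-\alpha}$, which is exactly what is absorbed into $(\beta/c)^{\alpha}$. The resulting expression is the $\Gam(\alpha,\beta/c)$ density, so $H \sim \Gam(\alpha,\beta/c)$.

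There is no real obstacle here; the only care needed is keeping track of the Jacobian factor $1/c$ so that the normalizing constant comes out exactly right. As an independent check, one can instead use the moment generating function. Since $\EE[e^{sG}] = (1-s/\beta)^{-\alpha}$ for $s<\beta$, we get $\EE[e^{scG}] = (1-sc/\beta)^{-\alpha} = (1 - s/(\beta/c))^{-\alpha}$ for $s<\beta/c$. This is the MGF of $\Gam(\alpha,\beta/c)$, and it determines the law because it is finite on a neighborhood of $0$.

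A quick sanity check against the stated moments: the mean scales as $c\,\alpha/\beta = \alpha/(\beta/c)$ and the variance as $c^2\alpha/\beta^2 = \alpha/(\beta/c)^2$, as expected.
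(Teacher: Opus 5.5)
Your change-of-variables proof is correct and is the same route the paper takes; the paper only asserts that the lemma ``follows from a change of variables'' without writing it out. Your density computation, including the Jacobian factor $1/c$ combining with $c^{-(\alpha-1)}$ into $(\beta/c)^{\alpha}$, fills in exactly that step, and the MGF check is a valid but unnecessary extra.
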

	
	\begin{lemma}[Additivity]\label{lem:gamma-add}
		If $G_1 \sim \Gam(\alpha_1, \beta)$ and $G_2 \sim \Gam(\alpha_2, \beta)$ are independent (same rate~$\beta$), then $G_1 + G_2 \sim \Gam(\alpha_1 + \alpha_2, \beta)$.
	\end{lemma}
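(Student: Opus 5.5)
The additivity claim follows from a uniqueness-of-transform argument, and I plan to use the moment generating function. The MGF of $G\sim\Gam(\alpha,\beta)$ is computed directly from the density \eqref{eq:gamma-pdf}: for $s<\beta$,
\begin{equation*}
M_G(s)=\EE[e^{sG}]=\frac{\beta^{\alpha}}{\Gamma(\alpha)}\int_0^\infty g^{\alpha-1}e^{-(\beta-s)g}\,dg=\Bigl(\frac{\beta}{\beta-s}\Bigr)^{\alpha}.
\end{equation*}
The integral is evaluated by substituting $u=(\beta-s)g$, which reduces it to $\Gamma(\alpha)(\beta-s)^{-\alpha}$.

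Next, independence of $G_1$ and $G_2$ gives $M_{G_1+G_2}(s)=M_{G_1}(s)M_{G_2}(s)$, and therefore
\begin{equation*}
M_{G_1+G_2}(s)=\Bigl(\frac{\beta}{\beta-s}\Bigr)^{\alpha_1+\alpha_2}
\end{equation*}
on the common interval $s<\beta$. The key point is that both factors share the same rate $\beta$, so the exponents simply add.

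This is exactly the MGF of $\Gam(\alpha_1+\alpha_2,\beta)$. Both MGFs are finite on an open neighborhood of $0$, namely $(-\infty,\beta)$, so the uniqueness theorem for MGFs identifies the distribution. The only point needing care is citing that uniqueness theorem.

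If we want to avoid that citation, I would instead compute the density of the sum by convolution. For $z>0$,
\begin{equation*}
p(z)=\int_0^z p_{G_1}(g)\,p_{G_2}(z-g)\,dg.
\end{equation*}
The exponential factors combine into $e^{-\beta z}$. The substitution $g=zv$ turns the remaining integral into $z^{\alpha_1+\alpha_2-1}B(\alpha_1,\alpha_2)$. The Beta–Gamma identity $B(\alpha_1,\alpha_2)=\Gamma(\alpha_1)\Gamma(\alpha_2)/\Gamma(\alpha_1+\alpha_2)$ then yields the $\Gam(\alpha_1+\alpha_2,\beta)$ density exactly. In this route, that identity is the main nontrivial ingredient.
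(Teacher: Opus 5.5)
Your MGF argument is correct and follows the paper's route: the paper computes the characteristic function $(1-is/\beta)^{-\alpha}$, multiplies the two factors under independence, and invokes uniqueness, whereas you use the MGF $\bigl(\beta/(\beta-s)\bigr)^{\alpha}$ on $s<\beta$. That is equally valid because the MGF is finite on an open interval around $0$, as you note. Your convolution fallback is also correct and is the more elementary option: it needs no uniqueness theorem and produces the $\Gamma(\alpha_1+\alpha_2,\beta)$ density directly from the Beta--Gamma identity.
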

	
	Lemma~\ref{lem:gamma-scale} follows from a change of variables; Lemma~\ref{lem:gamma-add} is a standard characteristic-function calculation, recalled in Appendix~\ref{app:gamma-add} for completeness. A convenient consequence is the unit-mean Gamma: if $G \sim \Gam(\alpha, 1)$, then $G/\alpha \sim \Gam(\alpha, \alpha)$ has $\EE[G/\alpha]=1$ and $\mathrm{Var}(G/\alpha)=1/\alpha$, so a single scalar $\alpha$ controls both the mean and the variance.
	
	\subsection{Multiplicative Gamma Observation Model}
	\label{sec:method-problem}
	\begin{definition}[$L$-look SAR observation]\label{def:speckle} Let $x_0 \in \RR_{+}^{H \times W}$ be the underlying reflectivity. The observation $\xobs \in \RR_{+}^{H\times W}$ is related to $x_0$ by the multiplicative model
		\begin{equation}
			\begin{aligned}
				\xobs \;&=\; x_0 \odot N, \\
				N_{i,j} \;&\stackrel{\mathrm{i.i.d.}}{\sim}\; \Gam(\Lobs,\, \Lobs),
			\end{aligned}
			\label{eq:speckle}
		\end{equation}
		where $\odot$ denotes element-wise product. By the unit-mean Gamma property of Section~\ref{sec:method-prelim}, each speckle sample satisfies $\EE[N_{i,j}] = 1$ and $\mathrm{Var}(N_{i,j}) = 1/\Lobs$, so $\Lobs$ controls the noise strength through the variance alone.
	\end{definition}
	We drop pixel indices below; all equations act element-wise. The task is to produce an estimator $\hat{x}_0(\xobs)$ approximates $x_0$.
	
	\subsection{Look-parametric forward bridge}
	\label{sec:method-forward}
	
	\begin{definition}[Look schedule and forward bridge]\label{def:forward} Let $L : \{0, \ldots, T{-}1\} \to [\Lobs, \Lmax]$ be a strictly monotone look schedule with $L(0){=}\Lmax$ and $L(T{-}1){=}\Lobs$. The forward bridge $q(x_t \mid x_0)$ is specified by its marginals:
		\begin{equation}
			\begin{aligned}
				x_t \;&=\; \frac{x_0}{L(t)}\, G_t, \\
				G_t \;&\sim\; \Gam\bigl(L(t),\, 1\bigr).
			\end{aligned}
			\label{eq:forward-def}
		\end{equation}
	\end{definition}
	
	\begin{proposition}[Forward marginal]\label{prop:forward-marg}
		Under Eq.~\ref{eq:forward-def},
		\[
		x_t \;\sim\; x_0 \cdot \Gam\bigl(L(t), L(t)\bigr),
		\]
		i.e.\ $x_t$ is a physically valid $L(t)$-look SAR image of $x_0$. The endpoint at $t{=}0$ concentrates at $x_0$ as $\Lmax \to \infty$ and the endpoint at $t{=}T{-}1$ matches Eq.~\ref{eq:speckle}.
	\end{proposition}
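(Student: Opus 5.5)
The plan is to handle the three claims of Proposition~\ref{prop:forward-marg} separately, all conditional on $x_0$ and pixelwise, since every equation acts element-wise.

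\emph{Forward marginal.} Fix a pixel with $x_0>0$ and write $x_t = x_0\cdot (G_t/L(t))$. Lemma~\ref{lem:gamma-scale} with $c=1/L(t)$ turns $G_t\sim\Gam(L(t),1)$ into $G_t/L(t)\sim\Gam(L(t),L(t))$; this is the unit-mean Gamma noted after Lemma~\ref{lem:gamma-add}, so $\EE=1$ and $\mathrm{Var}=1/L(t)$. Hence $x_t = x_0\cdot N_t$ with $N_t\sim\Gam(L(t),L(t))$, which is Definition~\ref{def:speckle} with $\Lobs$ replaced by $L(t)$. Applying Lemma~\ref{lem:gamma-scale} once more, now with $c=x_0$, gives the explicit law $x_t\mid x_0\sim\Gam(L(t),L(t)/x_0)$, so $\EE[x_t\mid x_0]=x_0$. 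This is the ``physically valid $L(t)$-look'' statement. Independence across pixels holds if the $G_t$ are drawn i.i.d.\ per pixel; I would make that convention explicit, since Definition~\ref{def:forward} leaves it implicit.

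\emph{Endpoint $t=T-1$.} Substitute $L(T-1)=\Lobs$ into the marginal. This gives $x_{T-1}\stackrel{d}{=}x_0\odot N$ with $N_{i,j}\sim\Gam(\Lobs,\Lobs)$ i.i.d., which is exactly Eq.~\ref{eq:speckle}. The agreement is in distribution, which is all a marginal statement can assert.

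\emph{Endpoint $t=0$ as $\Lmax\to\infty$.} With $N_0\sim\Gam(\Lmax,\Lmax)$, I have $\EE[(x_0N_0-x_0)^2]=x_0^2/\Lmax\to0$. So $x_0N_0\to x_0$ in $L^2$, hence in probability, and by Chebyshev the law converges weakly to $\delta_{x_0}$. This step is the only real subtlety: ``concentrates'' has to be given a precise meaning, since for finite $\Lmax$ the state $x_0$ itself is not exactly reached. I would state it as the $L^2$ and weak convergence above.
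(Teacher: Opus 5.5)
Your proposal is correct and follows essentially the paper's argument: two applications of Lemma~\ref{lem:gamma-scale} give $x_t\mid x_0\sim\Gam\bigl(L(t),L(t)/x_0\bigr)$. The paper takes $c=x_0/L(t)$ first and then $c=x_0$, whereas you take $c=1/L(t)$ and then $c=x_0$. The $t=T-1$ endpoint is the substitution $L(T-1)=\Lobs$, and the $t=0$ endpoint follows from $\mathrm{Var}(x_t)=x_0^2/L(t)\to 0$. Your $L^2$ and weak-convergence reading of ``concentrates'' and your explicit i.i.d.-per-pixel convention only make precise what the paper leaves informal.
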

	
	\begin{proof}
		Lemma~\ref{lem:gamma-scale} with $c = x_0/L(t)$: if $G_t \sim \Gam(L(t), 1)$ then $x_t = (x_0/L(t))\,G_t \sim \Gam\bigl(L(t),\, L(t)/x_0\bigr)$, which is precisely the law of $x_0 \cdot G$ with $G \sim \Gam(L(t), L(t))$ (a unit-mean Gamma; apply Lemma~\ref{lem:gamma-scale} again with $c=x_0$). The endpoint limits follow from $\mathrm{Var}(x_t) = x_0^{2}/L(t) \to 0$ as $L(t)\to\infty$ and $L(T{-}1) = \Lobs$ recovering Eq.~\ref{eq:speckle}.
	\end{proof}
	
	\begin{remark}[Physical interpretation]\label{rem:physical} The intermediate states are not abstract latent variables: they are $L(t)$-look SAR images. Every step of the bridge has a direct physical interpretation, which enables both target-$L$ output control and smart-start input control at inference time.
	\end{remark}
	
	\subsection{Gamma--L\'evy Decomposition}
	\label{sec:method-levy}
	
	The next step couples two adjacent bridge states. Consider a reverse step from $t$ to $t{-}1$ (so $L(t{-}1) > L(t)$). Define the increment
	\begin{equation}
		\begin{aligned}
			&Y_{t \to t{-}1} \;\sim\; \Gam\bigl(L(t{-}1) - L(t),\, 1\bigr), \\
			&Y_{t \to t{-}1} \text{ independent of } G_t.
		\end{aligned}
		\label{eq:increment}
	\end{equation}
	
	\begin{proposition}[L\'evy decomposition]\label{prop:levy} Let $G_{t-1} \;:=\; G_t + Y_{t\to t-1}$. Then $G_{t-1} \sim \Gam(L(t{-}1),\, 1)$, and
		\begin{equation}
			\begin{aligned}
				x_{t-1} \;&=\; \alpha_t\, x_t + \frac{x_0}{L(t{-}1)}\, Y_{t\to t{-}1}, \\
				\alpha_t \;&:=\; \frac{L(t)}{L(t{-}1)}.
			\end{aligned}
			\label{eq:levy}
		\end{equation}
	\end{proposition}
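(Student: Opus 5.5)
The proof has two parts: a distributional claim about $G_{t-1}$ and an algebraic identity for $x_{t-1}$. Neither needs anything beyond Lemma~\ref{lem:gamma-add}, Definition~\ref{def:forward} and Proposition~\ref{prop:forward-marg}.

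For the first part, note that $G_t \sim \Gam(L(t),1)$ by Definition~\ref{def:forward}. The increment satisfies $Y_{t\to t-1}\sim\Gam(L(t{-}1)-L(t),1)$ by Eq.~\ref{eq:increment}, and it is independent of $G_t$. Its shape parameter is strictly positive because the schedule is strictly monotone with $L(t{-}1)>L(t)$, so this is a genuine Gamma law. The two variables share rate $1$, so Lemma~\ref{lem:gamma-add} gives directly
\[
G_{t-1}=G_t+Y_{t\to t-1}\sim\Gam\bigl(L(t{-}1),1\bigr).
\]
This is exactly the law Definition~\ref{def:forward} requires of the multiplier at index $t{-}1$. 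Consequently, $x_{t-1}:=(x_0/L(t{-}1))\,G_{t-1}$ has the correct forward marginal $x_0\cdot\Gam(L(t{-}1),L(t{-}1))$ by Proposition~\ref{prop:forward-marg}.

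For the identity, I would substitute $G_{t-1}=G_t+Y_{t\to t-1}$ into the definition of $x_{t-1}$:
\[
x_{t-1}=\frac{x_0}{L(t{-}1)}G_t+\frac{x_0}{L(t{-}1)}Y_{t\to t-1}.
\]
Since $x_t=(x_0/L(t))\,G_t$, we have $G_t=L(t)\,x_t/x_0$; this uses $x_0>0$ element-wise, which holds because $x_0\in\RR_+^{H\times W}$. The first term therefore equals $\frac{L(t)}{L(t{-}1)}x_t=\alpha_t x_t$, which yields Eq.~\ref{eq:levy}. All operations are element-wise, so the pixel-wise argument covers the whole image. The increments should be taken independent across pixels, as in Definition~\ref{def:speckle}.

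The only real subtlety is interpretive rather than computational. The statement defines $x_{t-1}$ through the coupling $G_{t-1}=G_t+Y$, so $x_t$ and $x_{t-1}$ are built on a common probability space. I would make explicit that this coupling is consistent with the marginal specification of Definition~\ref{def:forward}, which the first part shows, and that the resulting pair yields a Markov-type construction. Iterating the step from $t=T{-}1$ downward with independent increments gives a joint process whose one-dimensional marginals all agree with Proposition~\ref{prop:forward-marg}. Once this point is noted, the proof is immediate.
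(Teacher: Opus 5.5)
Your proof is correct and follows the paper's argument: Lemma~\ref{lem:gamma-add} (independence and common rate $1$) gives $G_{t-1}\sim\Gam(L(t{-}1),1)$, and substituting $G_{t-1}=G_t+Y_{t\to t-1}$ into the definition of $x_{t-1}$ yields Eq.~\eqref{eq:levy}. The only difference is cosmetic: you invert $G_t = L(t)\,x_t/x_0$, which needlessly requires $x_0>0$, whereas the paper regroups $\frac{x_0}{L(t{-}1)}G_t = \frac{L(t)}{L(t{-}1)}\cdot\frac{x_0}{L(t)}G_t = \alpha_t x_t$, which avoids dividing by $x_0$ and holds for any $x_0\ge 0$.
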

	
	\begin{figure}[!t]
		\centering
		\includegraphics[width=\linewidth]{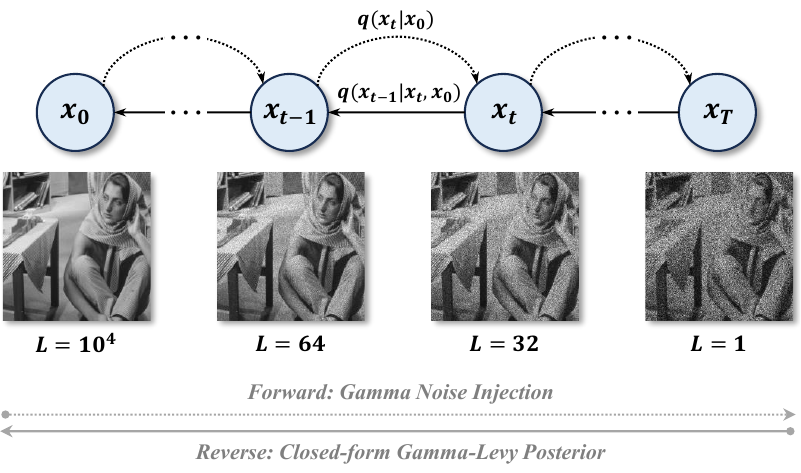}
		\caption{Overview of $\gamma$-Bridge: the bridge time indexes the physical look number $L(t)$, connecting the clean limit to the single-look observation through exact Gamma marginals.}
	\end{figure}
	
	\begin{proof}
		$Y_{t\to t{-}1} \sim \Gam(L(t{-}1){-}L(t), 1)$ and $G_t \sim \Gam(L(t), 1)$ share rate~$1$ and are independent by construction; Lemma~\ref{lem:gamma-add} gives $G_{t-1} \sim \Gam(L(t{-}1), 1)$. Substituting Eq.~\eqref{eq:forward-def} at step $t{-}1$,
		\begin{align*}
			x_{t-1} \;&=\; \frac{x_0}{L(t{-}1)}\,(G_t + Y_{t\to t{-}1}) \\
			&=\; \frac{L(t)}{L(t{-}1)} \cdot \frac{x_0}{L(t)}\,G_t
			+ \frac{x_0}{L(t{-}1)}\, Y_{t\to t{-}1} \\
			&=\; \alpha_t\, x_t + \frac{x_0}{L(t{-}1)}\, Y_{t\to t{-}1}. \qedhere
		\end{align*}
	\end{proof}
	
	
	\begin{remark}[Structural consequences]\label{rem:levy-consequences} Two immediate consequences of Prop.~\ref{prop:levy}: (i) the reverse step is a convex-like interpolation between the current state and a Gamma noise draw scaled by $x_0$; (ii) the bridge never adds negative mass, so $x_t > 0$ holds throughout without any clipping heuristic.
	\end{remark}
	
	\subsection{Closed-form reverse posterior}
	\label{sec:method-posterior}
	
	The reverse posterior $q(x_{t-1} \mid x_t, x_0)$ is what a sampler uses when $x_0$ is unknown and replaced by a network estimate $\hat{x}_0$. We derive two forms.
	
	\smallskip
	
	\noindent \textbf{Stochastic reverse.}
	Eq.~\ref{eq:levy} defines the reverse transition directly: given $x_t$ and $x_0$, draw an independent $Y_{t\to t{-}1} \sim \Gam(L(t{-}1){-}L(t), 1)$ and set
	\begin{equation}
			\begin{aligned}
				x_{t-1} \;&=\; \alpha_t\, x_t \;+\; \frac{\hat{x}_0}{L(t{-}1)}\, Y_{t\to t{-}1}, \\
				Y_{t\to t{-}1} \;&\sim\; \Gam\bigl(L(t{-}1){-}L(t),\, 1\bigr).
			\end{aligned}
		\label{eq:posterior-stoch}
	\end{equation}

	\noindent \textbf{Deterministic reverse.}
	The deterministic form replaces the Gamma draw by its expectation. $\EE[Y_{t\to t{-}1}] = L(t{-}1) - L(t)$, so
	\begin{equation}
		\begin{aligned}
			\EE_{Y_{t\to t{-}1}}\!\left[\frac{\hat{x}_0}{L(t{-}1)}\, Y_{t\to t{-}1}\right]
			\;&=\; \frac{L(t{-}1) - L(t)}{L(t{-}1)}\, \hat{x}_0 \\
			\;&=\; (1-\alpha_t)\,\hat{x}_0.
		\end{aligned}
	\end{equation}
	Taking expectations of Eq.~\ref{eq:posterior-stoch},
	\begin{equation}
			x_{t-1} \;=\; \alpha_t\, x_t \;+\; (1-\alpha_t)\,\hat{x}_0.
		\label{eq:posterior-otode}
	\end{equation}
	The deterministic form is an affine mixture between the current state and the network's clean estimate, with the mixture coefficient $\alpha_t = L(t)/L(t{-}1)$ set entirely by the schedule. Both posterior forms are self-consistent with the forward marginal of Prop.~\ref{prop:forward-marg} when the oracle $x_0$ is used.
	
	\begin{theorem}[Marginal preservation]\label{thm:marginal} Assume $\hat{x}_0 = x_0$. Then Eq.~\ref{eq:posterior-stoch} yields
		\[
		x_{t-1} \;\sim\; x_0 \cdot \Gam\bigl(L(t{-}1),\, L(t{-}1)\bigr),
		\]
		and Eq.~\ref{eq:posterior-otode} yields $\EE[x_{t-1}] = x_0$.
	\end{theorem}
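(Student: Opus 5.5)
The plan is to treat the two claims separately. Both rest on the forward marginal of Prop.~\ref{prop:forward-marg} at step $t$, i.e.\ $x_t=(x_0/L(t))\,G_t$ with $G_t\sim\Gam(L(t),1)$, together with the independence of the fresh increment $Y_{t\to t-1}$ from $G_t$ stipulated in Eq.~\eqref{eq:increment}.

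\textbf{Stochastic form.} With $\hat{x}_0=x_0$, substitute $x_t=(x_0/L(t))G_t$ into Eq.~\eqref{eq:posterior-stoch}. Since $\alpha_t=L(t)/L(t{-}1)$, this gives
\[
x_{t-1}=\frac{L(t)}{L(t{-}1)}\cdot\frac{x_0}{L(t)}G_t+\frac{x_0}{L(t{-}1)}Y_{t\to t-1}=\frac{x_0}{L(t{-}1)}\bigl(G_t+Y_{t\to t-1}\bigr).
\]
This is exactly the identity of Prop.~\ref{prop:levy} read in the other direction. By Lemma~\ref{lem:gamma-add}, since the two summands share rate $1$ and are independent, $G_t+Y_{t\to t-1}\sim\Gam(L(t{-}1),1)$. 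Applying Lemma~\ref{lem:gamma-scale} twice, exactly as in the proof of Prop.~\ref{prop:forward-marg}, turns $(x_0/L(t{-}1))\,\Gam(L(t{-}1),1)$ into the law $x_0\cdot\Gam(L(t{-}1),L(t{-}1))$.

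\textbf{Deterministic form.} Take expectations of Eq.~\eqref{eq:posterior-otode} over the randomness in $x_t$, which is the only remaining randomness. Prop.~\ref{prop:forward-marg} gives $\EE[x_t]=x_0\cdot\EE[\Gam(L(t),L(t))]=x_0$. Hence
\[
\EE[x_{t-1}]=\alpha_t x_0+(1-\alpha_t)x_0=x_0.
\]
The same computation also shows that the deterministic map preserves the mean along the whole chain, by induction from $t=T{-}1$.

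\textbf{Main obstacle.} The only delicate point is the stochastic case. The argument needs $x_t$ to be \emph{exactly} distributed as the forward marginal with the same $x_0$, and $Y_{t\to t-1}$ to be drawn independently of $x_t$. Otherwise Lemma~\ref{lem:gamma-add} does not apply. I would state this explicitly as the inductive hypothesis. For the chain started at $x_{T-1}=\xobs$, it holds at the base by Definition~\ref{def:speckle}, and the result above propagates it step by step. The statement is therefore a distributional, not pathwise, claim: the produced $G_{t-1}$ need not equal the forward chain's $G_{t-1}$, only match it in law. One should also note that $L(t{-}1)-L(t)>0$ by the strict monotonicity of the schedule, so the increment's Gamma law is well defined.
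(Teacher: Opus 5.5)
Your proposal is correct and follows the paper's proof essentially step for step: the same substitution $x_t=(x_0/L(t))\,G_t$ collapses Eq.~\eqref{eq:posterior-stoch} to $(x_0/L(t{-}1))\,(G_t+Y_{t\to t-1})$, followed by Lemma~\ref{lem:gamma-add} and Lemma~\ref{lem:gamma-scale}. For the deterministic form you use the same averaging with $\EE[x_t]=x_0$; the paper only adds a preliminary re-derivation of Eq.~\eqref{eq:posterior-otode} as the conditional expectation of the stochastic step over $Y_{t\to t-1}$, which you can legitimately skip since that update is given. Your explicit inductive hypothesis (exact forward marginal at step $t$ plus a fresh independent increment) is exactly what the paper invokes in its ``by hypothesis'' step and in its closing iterated-trajectory corollary.
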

	
	The proof is deferred to Appendix~\ref{app:posterior}. Theorem~\ref{thm:marginal} guarantees marginal-level correctness, but a diffusion sampler produces trajectories rather than isolated states; marginal matching at every $t$ is strictly weaker than joint-law matching. The next result upgrades Theorem~\ref{thm:marginal} to the trajectory level.
	
	\smallskip
	
	\noindent \textbf{The Gamma--L\'evy coupling.}
	Eq.~\ref{eq:forward-def} specifies the bridge only through its marginals, which do not uniquely determine a joint distribution across time. We introduce a specific coupling (call it Gamma--L\'evy coupling), induced by Proposition~\ref{prop:levy}: sample $G_{T-1}$ once, and generate every other state by adding independent L\'evy increments $Y_{t\to t{-}1}$ (independent of everything at times $\ge t$). Under this coupling the forward process is a Markov chain in the reverse-time index and admits the joint distribution
	\begin{equation}
		\begin{aligned}
			q(x_{T-1}, &\ldots, x_0 \mid x_0) \\
			&=\; q(x_{T-1} \mid x_0)\, \prod_{t=1}^{T-1} q(x_{t-1} \mid x_t, x_0),
		\end{aligned}
		\label{eq:joint-forward}
	\end{equation}
	whose kernels are given exactly by Eq.~\ref{eq:posterior-stoch}. Any other coupling that shares the same marginals but differs on higher-order dependencies would also be consistent with Eq.~\ref{eq:forward-def}; we work with the Gamma--L\'evy coupling because it is the one implicitly realised by the reverse sampler.
	
	\begin{remark}[]\label{rem:notation}
		The $x_0$ to the right of the conditioning bar denotes the clean reflectivity (a deterministic input); the $x_0$ inside the tuple denotes the terminal bridge state at $t{=}0$, a random variable with mean $x_0$ and variance $x_0^{2}/\Lmax$. The two coincide as $\Lmax \to \infty$; for our $\Lmax{=}10^{4}$ the residual coefficient of variation is $10^{-2}$, small enough to justify the notational overloading below.
	\end{remark}
	
	\begin{proposition}[Joint-law preservation under the oracle]
		\label{prop:joint}
		Suppose $\hat{x}_0 = x_0$ in Eq.~\ref{eq:posterior-stoch}, and let the reverse chain be initialised at $x_{T-1} = \xobs \sim q(x_{T-1}\mid x_0)$. Then the trajectory $(x_{T-1}, x_{T-2}, \ldots, x_0)$ generated by iterating Eq.~\ref{eq:posterior-stoch} has law equal to the forward joint $q(x_{T-1}, \ldots, x_0 \mid x_0)$ of Eq.~\ref{eq:joint-forward}.
	\end{proposition}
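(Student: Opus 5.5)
The plan is to show that the forward joint of Eq.~\ref{eq:joint-forward} and the law of the reverse-chain trajectory have the same initial law at $t=T-1$ and the same transition kernels. The claim then follows by the chain rule for joint densities, or equivalently by induction on the number of reverse steps.

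First, the initial factors agree. By hypothesis $x_{T-1}=\xobs\sim q(x_{T-1}\mid x_0)$. By Prop.~\ref{prop:forward-marg} this is exactly the law $x_0\cdot\Gam(\Lobs,\Lobs)$ appearing as the first factor of Eq.~\ref{eq:joint-forward}.

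Second, the kernels agree. Under the Gamma--L\'evy coupling, Prop.~\ref{prop:levy} gives
\[
x_{t-1}=\alpha_t x_t+\frac{x_0}{L(t{-}1)}Y_{t\to t-1}.
\]
Here $Y_{t\to t-1}$ is independent of $G_t$ and of every increment introduced at times $\ge t$. Hence it is independent of $(x_{T-1},\dots,x_t)$, since given $x_0$ these are deterministic functions of $G_{T-1}$ and the earlier increments. Two consequences follow:
\begin{itemize}
\item The forward process is Markov in reverse time. Its one-step kernel is the law of $\alpha_t x_t+\frac{x_0}{L(t-1)}Y$ with $Y\sim\Gam(L(t{-}1){-}L(t),1)$, which by Lemma~\ref{lem:gamma-scale} is a shifted Gamma.
\item The sampler iterating Eq.~\ref{eq:posterior-stoch} with $\hat x_0=x_0$ draws a fresh independent $Y$ at each step. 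It is therefore Markov with the identical kernel.
\end{itemize}

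Finally, I would conclude by induction on $k$ that $(x_{T-1},\dots,x_{T-1-k})$ has the same law under both constructions. The base case $k=0$ is the first step above. For the inductive step, multiply the common joint of the first $k+1$ states by the common kernel $q(x_{T-2-k}\mid x_{T-1-k},x_0)$; this is legitimate because both processes are Markov. Alternatively, write both trajectories as the same measurable map $\Phi(\xobs,Y_{T-1\to T-2},\dots,Y_{1\to0})$ of independent inputs with identical laws, and push forward.

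The only delicate point is the independence bookkeeping in the second step. We must confirm that the coupling's increment $Y_{t\to t-1}$ is independent of the entire past $(x_{T-1},\dots,x_t)$, not merely of $G_t$, so that conditioning on $x_t$ alone determines the next step. This holds because the coupling generates each increment independently of everything at times $\ge t$, and $x_0$ is deterministic. The notational overloading of the terminal state $x_0$ (Remark~\ref{rem:notation}) needs no separate treatment: the terminal component is simply the last state of both trajectories.
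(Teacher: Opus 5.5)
Your proposal is correct and follows the paper's own argument. Both match the initial law $q(x_{T-1}\mid x_0)$, identify the reverse-time Markov kernel of the Gamma--L\'evy coupling with Eq.~\ref{eq:posterior-stoch} under the oracle, and conclude by the chain rule. Your explicit check that $Y_{t\to t-1}$ is independent of the entire past $(x_{T-1},\dots,x_t)$, and your pushforward alternative, only spell out in more detail the paper's one-line justification of the Markov property.
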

	
	\begin{proof}
		By Proposition~\ref{prop:levy}, the L\'evy-generated forward process is Markov in reverse-time index: $q(x_{t-1}\mid x_{t:T-1}, x_0) = q(x_{t-1}\mid x_t, x_0)$ because the increment $Y_{t\to t-1}$ is drawn independently at each descending step. The transition kernel is exactly Eq.~\ref{eq:posterior-stoch}. The sampled chain uses (a) the correct initial law $q(x_{T-1}\mid x_0)$ by assumption on the initial state and (b) the correct transition $q(x_{t-1}\mid x_t, x_0)$ at every step under $\hat{x}_0 = x_0$. Applying the chain rule $q(x_{T-1}, \ldots, x_0 \mid x_0) = q(x_{T-1}\mid x_0) \prod_t q(x_{t-1}\mid x_t, x_0)$ concludes.
	\end{proof}
	
	\begin{corollary}[Distributional correctness under the Gamma--L\'evy coupling]\label{cor:intermediate} Under the oracle $\hat x_0 {=} x_0$ and under the Gamma--L\'evy coupling of Eq.~\ref{eq:joint-forward}, any finite-dimensional statistic computed across bridge states---cross-step ratio images $\xobs/x_t$, ENL of partially denoised outputs, no-reference metrics at intermediate $L(t)$---has the correct distribution. The claim does not extend to couplings that share only the marginals of Eq.~\ref{eq:forward-def}, so it is Prop.~\ref{prop:joint} rather than Theorem~\ref{thm:marginal} that carries the joint-law guarantee.
	\end{corollary}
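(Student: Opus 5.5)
The corollary follows from Proposition~\ref{prop:joint} by a pushforward argument. Fix any finite index set $S\subseteq\{0,\dots,T-1\}$ and any measurable statistic $\phi$ of the states $(x_s)_{s\in S}$. Examples are the ratio image $\xobs/x_t=x_{T-1}/x_t$ (pixelwise), the ENL of $x_t$ computed as (sample mean)$^2$/(sample variance) over a region, and a no-reference metric evaluated on $x_t$. Each of these is a deterministic measurable function of the trajectory $(x_{T-1},\dots,x_0)$. For the ratio image this uses $x_t>0$ almost surely, which is guaranteed by Remark~\ref{rem:levy-consequences}.

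The key steps are as follows.

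\textbf{Step (i).} Proposition~\ref{prop:joint} states that the law of the trajectory generated by iterating Eq.~\ref{eq:posterior-stoch} from $x_{T-1}=\xobs$ equals the forward joint $q(x_{T-1},\dots,x_0\mid x_0)$ of Eq.~\ref{eq:joint-forward}.

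\textbf{Step (ii).} Equality of laws on $\RR_+^{(H\times W)\times T}$ implies equality of the pushforwards under any measurable map. So $\phi$ evaluated on the sampled trajectory has the same distribution as $\phi$ evaluated on a forward Gamma--L\'evy trajectory. This is what ``correct distribution'' means here.

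\textbf{Step (iii).} As a sanity check, make this explicit for the ratio statistic. Under the coupling, $x_{T-1}/x_t=(L(t)/\Lobs)\,G_{T-1}/G_t$ with $G_t=G_{T-1}+\sum_k Y_k$. By the Gamma--Beta relation, $G_{T-1}/G_t\sim\operatorname{Beta}(\Lobs,L(t)-\Lobs)$, independent of $G_t$. This gives the Beta-coupled ratio law the paper refers to. The identity depends on the specific coupling, not only on the marginals.

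For the negative clause, I would give a counterexample. Build a coupling with the same marginals as Eq.~\ref{eq:forward-def} but with independent $G_t$ across $t$: draw each $G_t\sim\Gam(L(t),1)$ independently. By Proposition~\ref{prop:forward-marg}, every marginal still agrees. The ratio $G_{T-1}/G_t$, however, becomes a scaled Beta-prime variable rather than a Beta variable. In particular, it can exceed $1$ with positive probability. Under the Gamma--L\'evy coupling this is impossible, since $G_t\ge G_{T-1}$. So the two couplings give different laws for the ratio statistic, and Theorem~\ref{thm:marginal}, which only controls marginals, cannot imply the corollary.

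The main obstacle is the bookkeeping around Remark~\ref{rem:notation}. The terminal state $x_0$ inside the trajectory is random, while the conditioning $x_0$ is the fixed reflectivity. I would keep $x_0$ fixed as the conditioning value throughout. The oracle sampler uses that fixed value in Eq.~\ref{eq:posterior-stoch}, which matches the forward kernels exactly, so the identification is exact rather than approximate. A second point to address is that statistics such as ENL may pool over pixels. Because all pixels are handled elementwise and independently given $x_0$, the joint law over pixels is a product of per-pixel trajectory laws, so the pushforward argument still applies.
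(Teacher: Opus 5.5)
Your proposal is correct and follows the paper's route. The paper treats the corollary as an immediate consequence of Prop.~\ref{prop:joint}: equality of the trajectory law implies equality of the law of any measurable function of the states, which is exactly your pushforward step, and your independent-coupling counterexample ($G_{T-1}/G_t \le 1$ a.s.\ under the Gamma--L\'evy coupling but exceeding $1$ with positive probability when the $G_t$ are independent) usefully substantiates the negative clause, which the paper only asserts.
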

	
	\begin{figure}[!t]
		\centering
		\includegraphics[width=\linewidth]{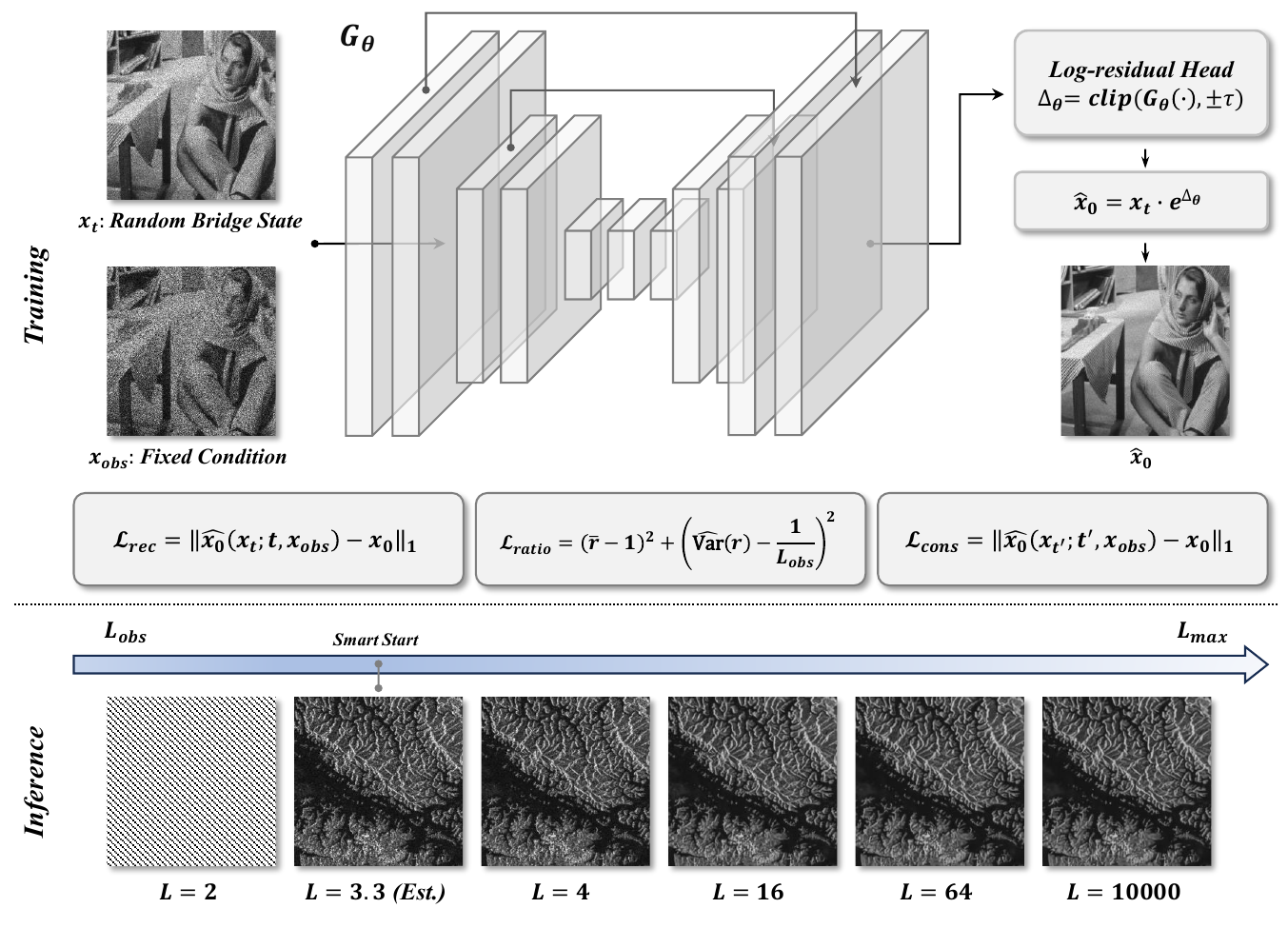}
		\caption{Training (top): $G_\theta$ maps a random bridge state $x_t$ and the observation $x_{\text{obs}}$ to $\hat{x}_0$ via a log-residual head, supervised by $\mathcal{L}_{\text{rec}}$, $\mathcal{L}_{\text{ratio}}$, and $\mathcal{L}_{\text{cons}}$. Inference (bottom): the estimated look number $\hat{L}$ triggers smart-start at the matching bridge step, and target-$L$ stopping produces outputs at any $L_{\text{out}}$.}
	\end{figure}
	
	%
	%
	
	\noindent\textbf{Relation to the network predictor.}
	In practice $\hat{x}_0$ is a network estimate rather than the true $x_0$, and the two levels of guarantee established above degrade differently.
	
	\textit{(i) Marginal correctness is retained pointwise.}
	If the network approximates the posterior mean, $\hat{x}_0(x_t, \xobs) \approx \EE[x_0 \mid x_t, \xobs]$, the deterministic reverse of Eq.~\eqref{eq:posterior-otode} tracks the correct posterior-mean trajectory at every step. We supervise the predictor with an $\ell_1$ reconstruction loss $\mathcal{L}_{\mathrm{rec}}$, whose pointwise minimizer is the conditional median. For the multiplicative-Gamma posterior, approximately unimodal and light-tailed after log transformation, mean and median coincide to within a few percent, so the trained predictor tracks the marginal mean to within $1\%$ across most of the schedule.
	
	\textit{(ii) Joint correctness is not automatic.}
	A predictor that is marginally unbiased at every $t$ can still produce trajectories with the wrong joint law: two predictions $\hat{x}_0(x_t, \xobs)$ and $\hat{x}_0(x_{t'}, \xobs)$ obtained at different steps of the same trajectory need not be consistent with each other, and their disagreement propagates into the joint distribution of $(x_{T-1}, \ldots, x_0)$. The consistency loss $\mathcal{L}_{\mathrm{cons}}$ (Eq.~\eqref{eq:loss-cons}) supplies a trajectory-level training signal aligned with Prop.~\ref{prop:joint}, reconstructing $x_0$ from the state reached by advancing the deterministic posterior from $t$ to $t'$.
	
	\begin{remark}[Scope of $\mathcal{L}_{\mathrm{cons}}$]\label{rem:cons-heuristic}
		$\mathcal{L}_{\mathrm{cons}}$ is defined along the deterministic trajectory (Eq.~\eqref{eq:cons-x-mid}), so it enforces trajectory consistency for the deterministic reverse of Eq.~\eqref{eq:posterior-otode}, whereas Prop.~\ref{prop:joint} concerns the stochastic chain Eq.~\eqref{eq:posterior-stoch}. The two are aligned in structure---both are joint-level statements about how predictions at different bridge times must agree---but deterministic-reverse consistency does not formally entail stochastic-chain joint-law fidelity. Empirically, combining $\mathcal{L}_{\mathrm{cons}}$ with $\mathcal{L}_{\mathrm{rec}}$ keeps the reverse trajectory on-manifold and improves multi-step inference in the single-look regime (Section~\ref{sec:exp-multistep}); a stochastic-chain consistency loss remains an open direction.
	\end{remark}
	
	\begin{remark}[Contrast with Gaussian bridges]\label{rem:gaussian-cf}
		Unlike I$^{2}$SB~\cite{liu2023}, which uses a Brownian-bridge SDE with Gaussian transition kernels, our posterior derives from the additivity of independent Gamma random variables (Lemma~\ref{lem:gamma-add}). Two structural consequences follow: (i) samples remain strictly positive by construction, so no clipping is needed to enforce the non-negativity of intensity images; (ii) the schedule enters only through the physical look number $L(t)$, giving bridge time an interpretation as an admissible inference-time control axis rather than an abstract signal-to-noise coordinate.
	\end{remark}
	
	\subsection{Log-residual parameterisation}
	\label{sec:method-param}
	
	Because the observation model is multiplicative in $x_0$, we parameterise the network to predict a multiplicative correction. Let $G_\theta$ be a UNet with inputs $(x_t,\, t,\, \log L(t),\, c)$, where $c=\xobs$ is the observation-conditioning channel introduced in Section~\ref{sec:method-train}. We define
	\begin{equation}
		\begin{aligned}
			\hat{x}_0(x_t; t, c) \;&=\; x_t \cdot \exp\!\bigl(\Delta_\theta\bigr), \\
			\Delta_\theta \;&:=\; \mathrm{clip}\!\bigl(G_\theta(x_t, t, \log L(t), c),\; -\tau,\, +\tau\bigr),
		\end{aligned}
		\label{eq:log-residual}
	\end{equation}
	where $\tau$ is a stability clip that bounds any single-step correction to a factor of $e^{\pm\tau}$; we set $\tau=5$ throughout, a bound never approached in practice on any converged checkpoint.
	
	\begin{remark}[Why log-residual]\label{rem:log-residual}
		Two motivations underpin Eq.~\eqref{eq:log-residual}. First, the true multiplicative correction has a clean logarithmic form: since $x_t = (x_0/L(t))\,G_t$ (Eq.~\eqref{eq:forward-def}), we have $\log(x_0/x_t) = -\log(G_t/L(t))$, so the network target $\Delta_\theta$ is directly aligned with the (unit-mean) Gamma noise realisation---a natural regression variable. Second, $\hat{x}_0 > 0$ is guaranteed for every $G_\theta$ output, automatically enforcing the non-negativity of intensity images without any constraint on the network's output range. 
		
	\end{remark}
	
	\subsection{Training objective}
	\label{sec:method-train}
	
	Training requires only a pool of clean natural images: an $x_0$-pool of clean images and a sampler that produces $x_t$ via Eq.~\ref{eq:forward-def} at random $t$. The conditioning input is $c = \xobs = x_0/\Lobs \cdot G_{T{-}1}$ drawn independently, i.e.\ the network always sees the observation at every reverse step (matching inference).
	
	The objective has three terms.
	
	\smallskip
	
	\noindent\textbf{Reconstruction.}
	For $t \sim \mathrm{Unif}\{1,\ldots,T{-}1\}$ and $x_t$ sampled from $q(x_t \mid x_0)$,
	\begin{equation}
		\mathcal{L}_{\mathrm{rec}}
		\;=\; \EE_{x_0, t, x_t}\,\bigl\|\hat{x}_0(x_t; t, \xobs) - x_0\bigr\|_1.
		\label{eq:loss-rec}
	\end{equation}
	
	\noindent\textbf{Radiometric moment matching.}
	The residual field $r = \xobs / \hat{x}_0$ should follow $\Gam(\Lobs, \Lobs)$, hence $\EE[r]=1$ and $\mathrm{Var}(r)=1/\Lobs$. We enforce these two moments through
	\begin{equation}
		\mathcal{L}_{\mathrm{ratio}}
		\;=\; (\overline{r} - 1)^{2}
		\;+\; \bigl(\widehat{\mathrm{Var}}(r) - \tfrac{1}{\Lobs}\bigr)^{2},
		\label{eq:loss-ratio}
	\end{equation}
	where $\overline{r}$ and $\widehat{\mathrm{Var}}(r)$ are the sample mean and variance pooled over the batch and spatial pixels. This pooled estimator implicitly treats $r_{i,j}$ as identically distributed across pixels, which is exact under the multiplicative Gamma model when $\hat{x}_0$ is uncorrelated with the noise realisation and holds approximately when the predictor introduces only mild spatial correlation.
	
	\begin{figure}[!t]
		\centering
		\includegraphics[width=\linewidth]{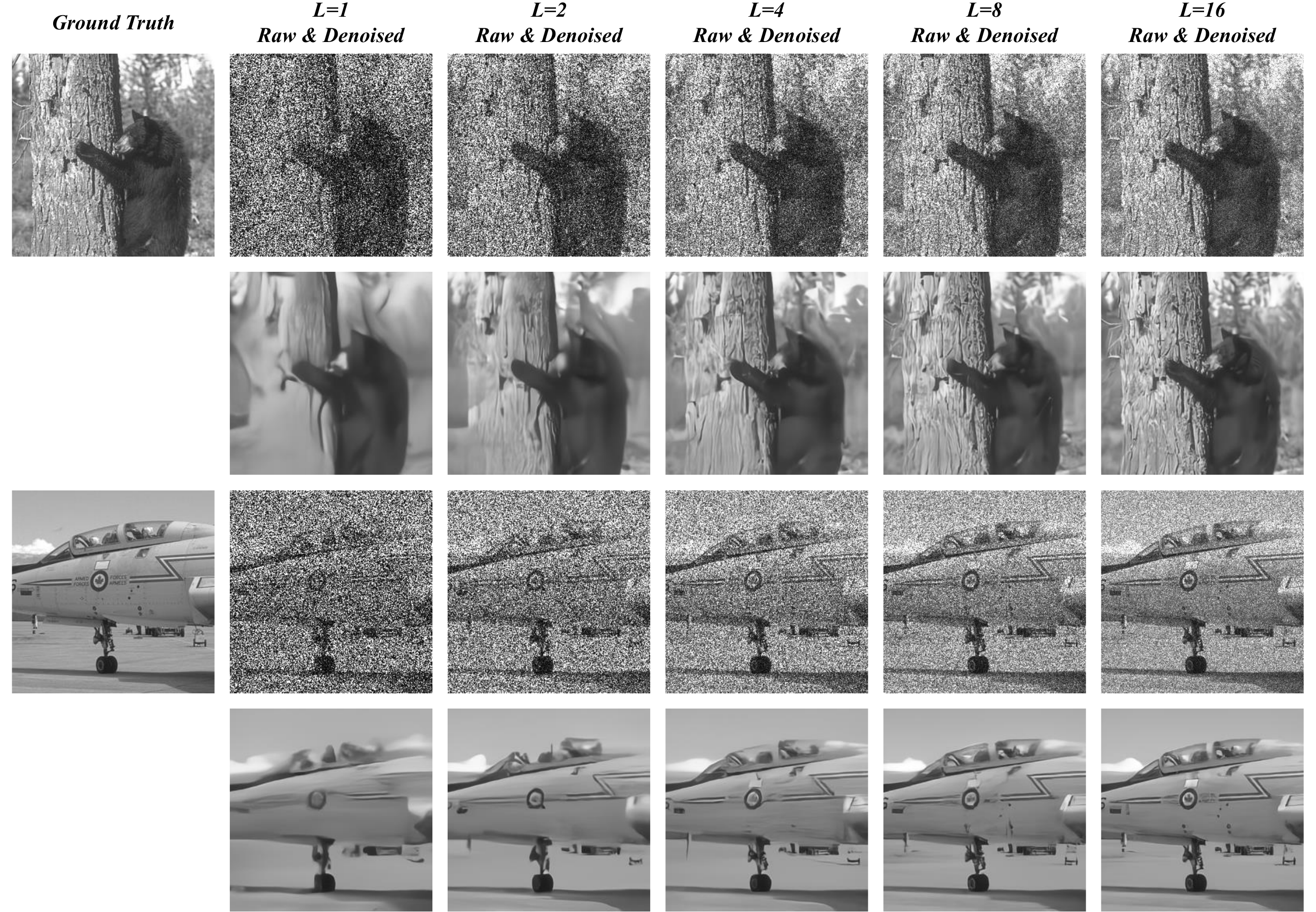}
		\caption{Qualitative synthetic despeckling on two natural scenes.
			Left: clean reference. Remaining columns: noisy Gamma-corrupted
			input and $\gamma$-Bridge output at $\Lin \in \{1,2,4,8,16\}$ from
			a single $\Lobs{=}1$-trained model (5-step deterministic, smart-start).}
		
		\label{fig:cmp-synth}
	\end{figure}
	
	\begin{table}[t]
		\scriptsize
		\centering
		\caption{Set12: PSNR / SSIM$\times100$ vs.\ $\Lobs$. Bold: best per
			column; underline: second.}
		\label{tab:cmp-set12}
		\begin{tabularx}{\linewidth}{p{1.6cm}*{8}{>{\centering\arraybackslash}X}}
			\toprule[1pt]
			\multirow{2}{*}{} & \multicolumn{2}{c}{\textbf{1-Look}} & \multicolumn{2}{c}{\textbf{2-Look}} & \multicolumn{2}{c}{\textbf{4-Look}} & \multicolumn{2}{c}{\textbf{8-Look}} \\
			& \multicolumn{1}{>{\hsize=1.5\hsize}X}{\textbf{PSNR}} & \multicolumn{1}{>{\hsize=1.5\hsize}X}{\textbf{SSIM}} & \multicolumn{1}{>{\hsize=1.5\hsize}X}{\textbf{PSNR}} & \multicolumn{1}{>{\hsize=1.5\hsize}X}{\textbf{SSIM}} & \multicolumn{1}{>{\hsize=1.5\hsize}X}{\textbf{PSNR}} & \multicolumn{1}{>{\hsize=1.5\hsize}X}{\textbf{SSIM}} & \multicolumn{1}{>{\hsize=1.5\hsize}X}{\textbf{PSNR}} & \multicolumn{1}{>{\hsize=1.5\hsize}X}{\textbf{SSIM}} \\
			\midrule
			ANLM~\cite{4359947}         & 14.11 & 27.86 & 17.97 & 40.04 & 23.04 & 60.08 & 22.94 & 66.98 \\
			DIP~\cite{Ulyanov_2018_CVPR}           & 17.08 & 48.69 & 18.18 & 56.81 & 19.04 & 61.86 & 20.83 & 68.97 \\
			DnCNN~\cite{7839189}       & 18.36 & 39.91 & 20.72 & 46.02 & 22.43 & 51.99 & 23.44 & 58.01 \\
			SARCAM~\cite{ko2021sar}     & 16.25 & 27.15 & 22.56 & 57.74 & 24.29 & 68.34 & 23.41 & 70.61 \\
			SAR2SAR~\cite{sar2sar}   & 17.54 & 45.06 & 21.78 & 55.39 & 22.08 & 59.74 & 22.22 & 61.72 \\
			SARtrans~\cite{9884596} & \underline{18.84} & 51.58 & 19.11 & 52.15 & 18.71 & 52.30 & 18.55 & 52.51 \\
			AGSDNet~\cite{9755131}   & 13.46 & 18.71 & 21.29 & 48.10 & \textbf{26.14} & \textbf{75.33} & \textbf{27.76} & \textbf{81.23} \\
			SIFSDNet~\cite{9883415} & 18.66 & 38.67 & \textbf{23.11} & 61.95 & \underline{25.14} & 72.01 & 25.82 & 76.87 \\
			MONet~\cite{9324183}       & \textbf{21.05} & \textbf{54.49} & \underline{22.77} & \underline{65.74} & 23.50 & 66.87 & 24.02 & 67.26 \\
			RDDPM~\cite{10641283}       & 16.98 & 32.22 & 18.50 & 38.51 & 19.93 & 44.26 & 21.03 & 49.87 \\
			CL-SAR~\cite{FANG2024376}      & 15.52 & 29.07 & 16.61 & 55.30 & 17.97 & 71.16 & 19.11 & 77.11 \\
			S$^3$DIP~\cite{albisani2025self}    & 16.38 & 50.98 & 17.30 & 56.06 & 17.00 & 62.57 & 16.26 & 64.35 \\
			$\gamma$-Bridge               & 18.77 & \underline{53.86} & 20.88 & \textbf{67.56} & 23.56 & \underline{74.56} & \underline{26.10} & \underline{79.80} \\
			\bottomrule[1pt]
		\end{tabularx}
	\end{table}

	\smallskip
	
	\noindent\textbf{Two-step consistency.}
	$\mathcal{L}_{\mathrm{rec}}$ alone is a one-step MAE objective: gradients flow only through a single application of $G_\theta$, so the multi-step reverse trajectory receives no explicit training signal. To induce trajectory-consistent predictions, we perform a second forward pass at an intermediate step. Given the first prediction $\hat{x}_0^{(1)} = \hat{x}_0(x_t; t, \xobs)$, we form
	\begin{equation}
		\begin{aligned}
			x_{t^{\prime}} \;&=\; \tilde{\alpha}\, x_t \;+\; (1 - \tilde{\alpha})\,\mathrm{sg}\!\bigl(\hat{x}_0^{(1)}\bigr), \\
			\tilde{\alpha} \;&=\; L(t)/L(t^{\prime}),
		\end{aligned}
		\label{eq:cons-x-mid}
	\end{equation}
	with $t^{\prime} \sim \mathrm{Unif}\{0,\ldots,t{-}1\}$ (a step strictly closer to the clean end, so $L(t^{\prime}) > L(t)$), $\mathrm{sg}$ the stop-gradient operator, and $\tilde{\alpha} = L(t)/L(t^{\prime}) \in (0,1)$ obtained by applying Eq.~\ref{eq:posterior-otode} with $t\!\to\!t^{\prime}$ (i.e.\ writing $\alpha_{t\to t^{\prime}} := L(t)/L(t^{\prime})$ in analogy with $\alpha_t = L(t)/L(t{-}1)$). The consistency loss is
	\begin{equation}
		\mathcal{L}_{\mathrm{cons}}
		\;=\; \EE\,\bigl\|\hat{x}_0(x_{t^{\prime}}; t^{\prime}, \xobs) - x_0\bigr\|_1.
		\label{eq:loss-cons}
	\end{equation}
	The stop-gradient on $\hat{x}_0^{(1)}$ prevents the first prediction from being trivially pulled toward the second, preserving the interpretation of the loss as a trajectory-consistency check rather than as a longer computation graph on the same target.
	
	\begin{remark}[Gradient contribution]\label{rem:cons-gradient}
		Appendix~\ref{app:cons-gradient} shows that $\mathcal{L}_{\mathrm{cons}}$ provides a training signal that $\mathcal{L}_{\mathrm{rec}}$ cannot: at the pointwise $\ell_1$ optimum, $\nabla_\theta \mathcal{L}_{\mathrm{rec}}{=}0$, whereas $\mathcal{L}_{\mathrm{cons}}$ retains a non-trivial gradient along the reverse trajectory.
	\end{remark}
	
	\noindent \textbf{Full objective:}
	\begin{equation}
		\begin{aligned}
			\mathcal{L} \;=\; \lambda_{\mathrm{rec}}\mathcal{L}_{\mathrm{rec}}
			\;&+\; \lambda_{\mathrm{ratio}}\mathcal{L}_{\mathrm{ratio}} \\
			\;&+\; \lambda_{\mathrm{cons}}\mathcal{L}_{\mathrm{cons}}.
		\end{aligned}
		\label{eq:loss-total}
	\end{equation}
	We use $\lambda_{\mathrm{rec}}=10$, $\lambda_{\mathrm{ratio}}=1$, $\lambda_{\mathrm{cons}}=5$ throughout the experiments unless otherwise noted.
	
	\subsection{Runtime look-number control}
	\label{sec:method-runtime}
	
	At inference time the trained network exposes two orthogonal control axes.
	
	\begin{table}[t]
		\scriptsize
		\centering
		\caption{McMaster: PSNR / SSIM$\times100$ vs.\ $\Lobs$.}
		\label{tab:cmp-mcmaster}
		\begin{tabularx}{\linewidth}{p{1.6cm}*{8}{>{\centering\arraybackslash}X}}
			\toprule[1pt]
			\multirow{2}{*}{} & \multicolumn{2}{c}{\textbf{1-Look}} & \multicolumn{2}{c}{\textbf{2-Look}} & \multicolumn{2}{c}{\textbf{4-Look}} & \multicolumn{2}{c}{\textbf{8-Look}} \\
			& \multicolumn{1}{>{\hsize=1.5\hsize}X}{\textbf{PSNR}} & \multicolumn{1}{>{\hsize=1.5\hsize}X}{\textbf{SSIM}} & \multicolumn{1}{>{\hsize=1.5\hsize}X}{\textbf{PSNR}} & \multicolumn{1}{>{\hsize=1.5\hsize}X}{\textbf{SSIM}} & \multicolumn{1}{>{\hsize=1.5\hsize}X}{\textbf{PSNR}} & \multicolumn{1}{>{\hsize=1.5\hsize}X}{\textbf{SSIM}} & \multicolumn{1}{>{\hsize=1.5\hsize}X}{\textbf{PSNR}} & \multicolumn{1}{>{\hsize=1.5\hsize}X}{\textbf{SSIM}} \\
			\midrule
			ANLM~\cite{4359947}         & 11.72 & 22.21 & 13.91 & 30.65 & 16.37 & 39.98 & 19.37 & 50.49 \\
			DIP~\cite{Ulyanov_2018_CVPR}           & 16.09 & 52.75 & 18.70 & 61.90 & 20.29 & 67.68 & 22.46 & 72.67 \\
			DnCNN~\cite{7839189}       & 11.83 & 22.86 & 13.86 & 31.98 & 16.56 & 43.12 & 20.08 & 54.77 \\
			SARCAM~\cite{ko2021sar}     & 14.33 & 26.44 & 17.78 & 39.64 & 24.08 & 64.60 & 27.43 & 79.20 \\
			SAR2SAR~\cite{sar2sar}   & 13.16 & 35.75 & 20.59 & 54.99 & 22.46 & 64.93 & 23.09 & 68.59 \\
			SARtrans~\cite{9884596} & 18.40 & 52.42 & 19.56 & 53.22 & 19.85 & 53.63 & 19.86 & 53.83 \\
			AGSDNet~\cite{9755131}   & 12.18 & 18.11 & 15.14 & 27.49 & 22.66 & 55.87 & \underline{28.49} & 81.67 \\
			SIFSDNet~\cite{9883415} & 16.01 & 30.62 & 20.04 & 48.33 & \underline{24.90} & 69.54 & 28.33 & \underline{82.02} \\
			MONet~\cite{9324183}       & 15.15 & 26.57 & 21.48 & 60.08 & 24.44 & \underline{77.36} & 26.06 & 78.75 \\
			RDDPM~\cite{10641283}       & 13.31 & 25.71 & 14.88 & 30.01 & 16.42 & 33.76 & 17.71 & 36.77 \\
			CL-SAR~\cite{FANG2024376}      & 15.77 & 36.12 & 17.11 & 60.60 & 18.95 & 76.29 & 20.37 & 81.75 \\
			S$^3$DIP~\cite{albisani2025self}    & 14.96 & \underline{55.98} & 15.14 & \underline{64.40} & 15.15 & 64.85 & 15.30 & 62.62 \\
			$\gamma$-Bridge               & \textbf{20.34} & \textbf{63.07} & \textbf{23.71} & \textbf{76.78} & \textbf{26.35} & \textbf{82.55} & \textbf{28.89} & \textbf{86.93} \\
			\bottomrule[1pt]
		\end{tabularx}
	\end{table}
	
	\begin{table}[t]
		\scriptsize
		\centering
		\caption{Kodak24: PSNR / SSIM$\times100$ vs.\ $\Lobs$.}
		\label{tab:cmp-kodak24}
		\begin{tabularx}{\linewidth}{p{1.6cm}*{8}{>{\centering\arraybackslash}X}}
			\toprule[1pt]
			\multirow{2}{*}{} & \multicolumn{2}{c}{\textbf{1-Look}} & \multicolumn{2}{c}{\textbf{2-Look}} & \multicolumn{2}{c}{\textbf{4-Look}} & \multicolumn{2}{c}{\textbf{8-Look}} \\
			& \multicolumn{1}{>{\hsize=1.5\hsize}X}{\textbf{PSNR}} & \multicolumn{1}{>{\hsize=1.5\hsize}X}{\textbf{SSIM}} & \multicolumn{1}{>{\hsize=1.5\hsize}X}{\textbf{PSNR}} & \multicolumn{1}{>{\hsize=1.5\hsize}X}{\textbf{SSIM}} & \multicolumn{1}{>{\hsize=1.5\hsize}X}{\textbf{PSNR}} & \multicolumn{1}{>{\hsize=1.5\hsize}X}{\textbf{SSIM}} & \multicolumn{1}{>{\hsize=1.5\hsize}X}{\textbf{PSNR}} & \multicolumn{1}{>{\hsize=1.5\hsize}X}{\textbf{SSIM}} \\
			\midrule
			ANLM~\cite{4359947}         & 11.29 & 10.03 & 13.49 & 16.32 & 16.13 & 26.33 & 19.27 & 40.46 \\
			DIP~\cite{Ulyanov_2018_CVPR}           & \underline{19.01} & 50.47 & 20.51 & 55.32 & 22.02 & 59.86 & 23.37 & 64.39 \\
			DnCNN~\cite{7839189}       & 11.41 & 10.30 & 13.35 & 16.55 & 16.04 & 27.60 & 19.62 & 43.76 \\
			SARCAM~\cite{ko2021sar}     & 13.85 & 15.47 & 16.97 & 26.39 & 22.64 & 51.64 & 25.38 & 69.12 \\
			SAR2SAR~\cite{sar2sar}   & 16.95 & 49.35 & 20.10 & 55.39 & 20.76 & 57.76 & 20.95 & 59.13 \\
			SARtrans~\cite{9884596} & 17.95 & 48.42 & 19.14 & 48.91 & 19.63 & 49.13 & 19.77 & 49.19 \\
			AGSDNet~\cite{9755131}   & 11.80 & 10.60 & 14.66 & 17.55 & 21.29 & 43.05 & \textbf{26.30} & 72.83 \\
			SIFSDNet~\cite{9883415} & 15.55 & 19.81 & 19.19 & 36.09 & \underline{23.48} & 58.20 & \textbf{26.30} & 73.14 \\
			MONet~\cite{9324183}       & 14.71 & 17.44 & \underline{20.67} & 47.89 & 23.45 & 68.00 & 24.59 & 67.78 \\
			RDDPM~\cite{10641283}       & 13.03 & 21.41 & 14.60 & 26.34 & 16.08 & 30.85 & 17.40 & 34.36 \\
			CL-SAR~\cite{FANG2024376}      & 15.91 & 23.78 & 18.37 & 50.32 & 20.36 & \underline{68.74} & 22.36 & \underline{75.09} \\
			S$^3$DIP~\cite{albisani2025self}    & 17.47 & \underline{51.96} & 16.50 & \underline{58.01} & 15.91 & 58.02 & 16.02 & 56.57 \\
			$\gamma$-Bridge               & \textbf{19.41} & \textbf{52.25} & \textbf{22.00} & \textbf{65.58} & \textbf{24.00} & \textbf{72.33} & \underline{25.85} & \textbf{77.55} \\
			\bottomrule[1pt]
		\end{tabularx}
	\end{table}
	
	\begin{figure*}[t]
		\centering
		\includegraphics[width=0.75\linewidth]{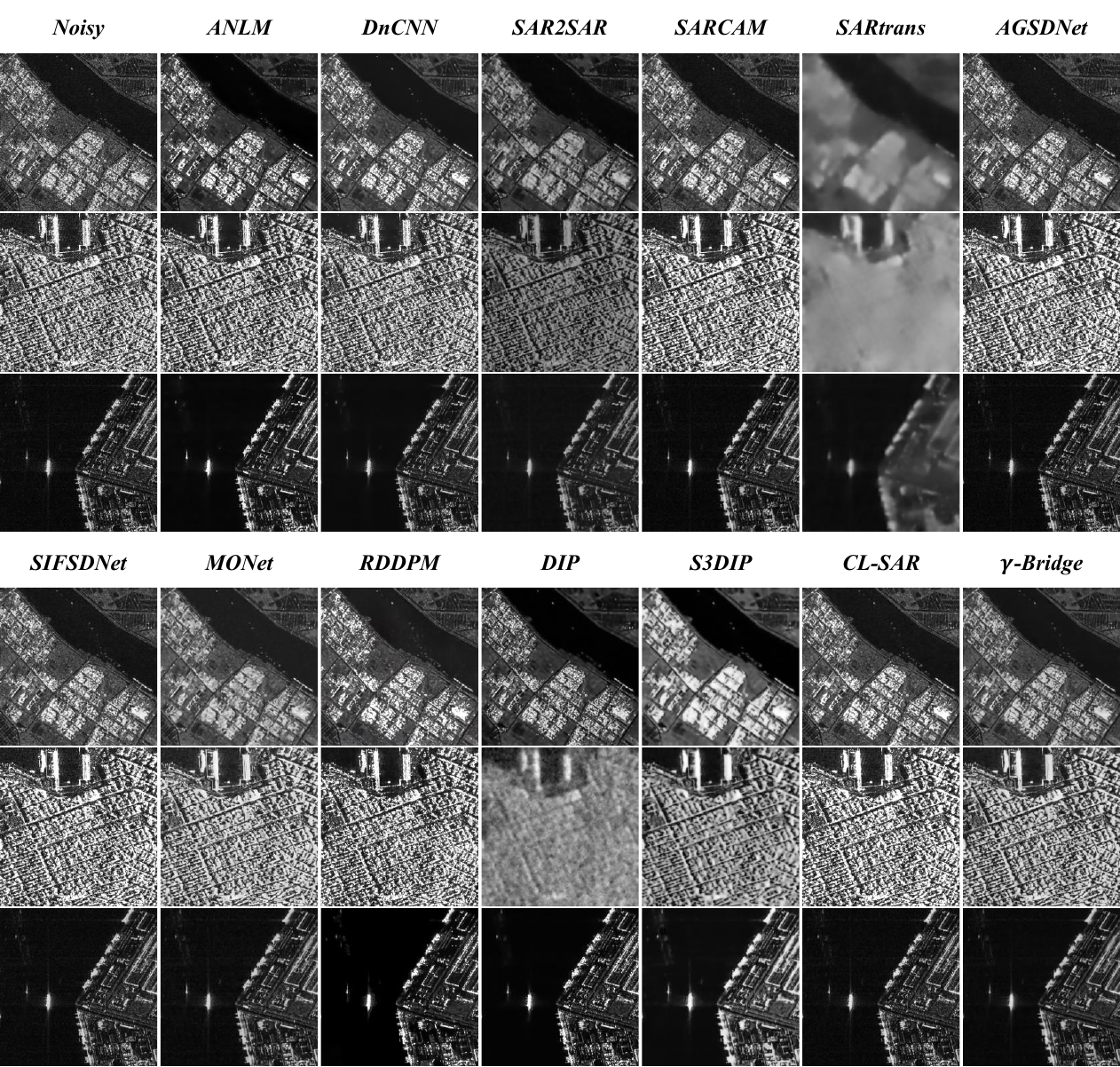}
		\caption{Qualitative comparison on spaceborne SAR sensors. Thirteen methods are split into two rows of seven columns each; $\gamma$-Bridge is the rightmost panel in the bottom row. Sensor rows (repeated in both halves) from top to bottom: Sentinel-1, TerraSAR-X, Gaofen-3.}
		\label{fig:cmp-real1}
	\end{figure*}
	
	\begin{figure*}[t]
		\centering
		\includegraphics[width=0.8\linewidth]{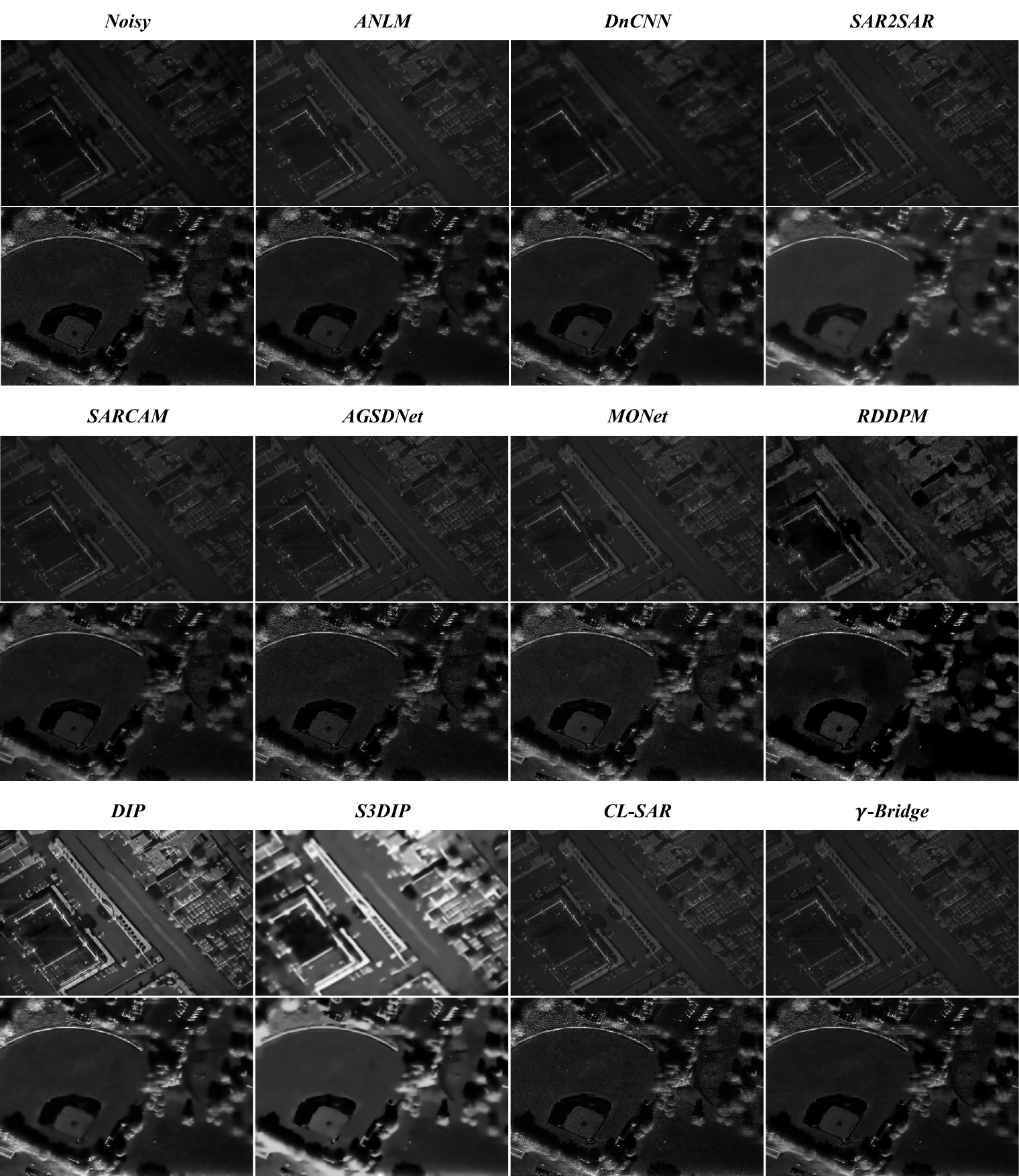}
		\caption{Qualitative comparison on airborne SAR sensors. Eleven methods are split into three rows; $\gamma$-Bridge is the rightmost panel in the bottom row. Sensor rows (repeated across method groups): miniSAR (top) and FARAD X-band (bottom).}
		\label{fig:cmp-real2}
	\end{figure*}
	
	\begin{figure*}[t]
		\centering
		\includegraphics[width=0.8\textwidth]{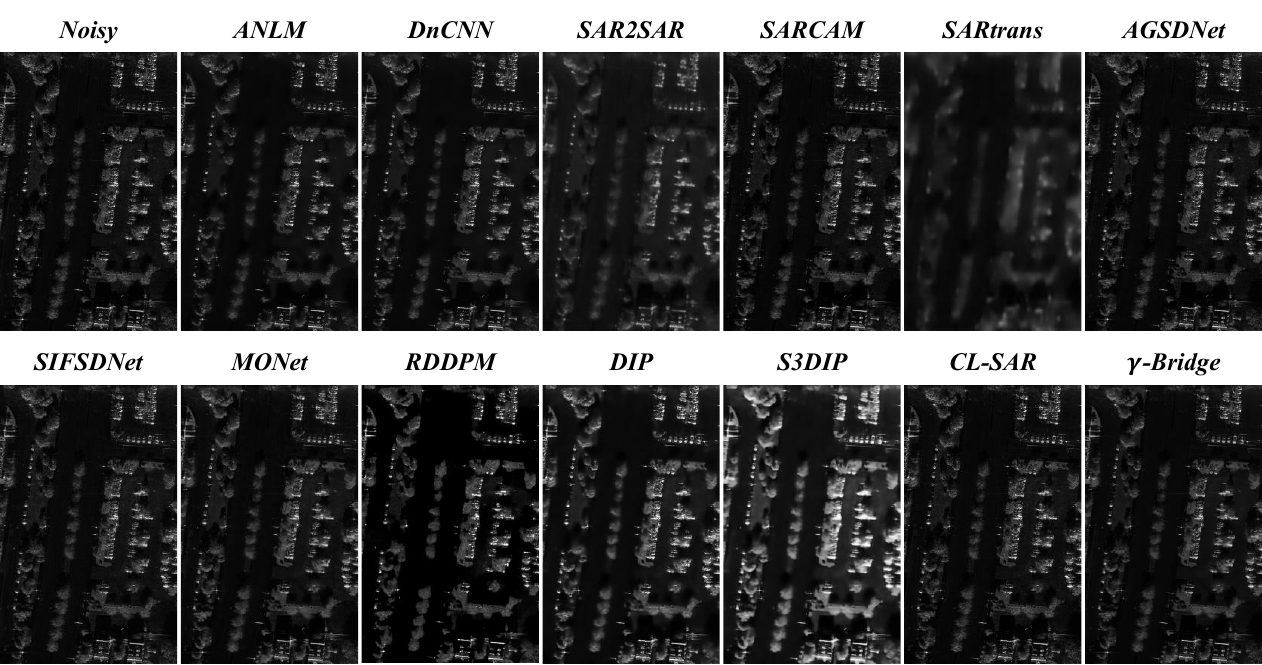}
		\caption{Qualitative comparison on a FARAD X-band scene (parking lot with vehicles). Thirteen methods split across two rows; $\gamma$-Bridge is the rightmost panel in the bottom row.}
		\label{fig:cmp-real3}
	\end{figure*}
	
	\begin{table*}[!t]
		\scriptsize
		\centering
		\caption{Spaceborne SAR: no-reference metrics (ENL, EPI, SQI, Mean). \label{tab:cmp-spaceborne}}
		\begin{tabularx}{0.8\linewidth}{>{\raggedright\arraybackslash}p{1.6cm}CCCCCCCCCCCC}
			\toprule
			\multirow{2}{*}{} &
			\multicolumn{4}{c}{\textbf{Sentinel-1} (Mean$=$34.39)} &
			\multicolumn{4}{c}{\textbf{TerraSAR-X} (Mean$=$30.04)} &
			\multicolumn{4}{c}{\textbf{Gaofen-3} (Mean$=$20.10)} \\
			& ENL & EPI & SQI & Mean
			& ENL & EPI & SQI & Mean
			& ENL & EPI & SQI & Mean \\
			\midrule
			ANLM~\cite{4359947}
			& 1.623 & \textbf{0.719} & \textbf{0.787} & 29.99
			& 1.413 & 0.553 & 0.676 & 29.13
			& 0.930 & 0.609 & 0.646 & 20.87 \\
			DIP~\cite{Ulyanov_2018_CVPR}
			& 2.766 & 0.117 & 0.601 & 31.89
			& \textbf{4.197} & 0.063 & 0.604 & 33.42
			& 0.670 & 0.081 & 0.445 & 18.37 \\
			DnCNN~\cite{7839189}
			& 2.781 & 0.618 & 0.589 & 35.67
			& 1.546 & 0.550 & 0.732 & 30.09
			& 1.177 & 0.689 & 0.766 & \underline{20.23} \\
			SARCAM~\cite{ko2021sar}
			& 2.816 & 0.621 & 0.595 & 34.75
			& 1.696 & 0.548 & 0.787 & 30.85
			& 1.231 & 0.603 & 0.784 & \textbf{20.01} \\
			SAR2SAR~\cite{sar2sar}
			& 2.868 & 0.651 & 0.645 & 27.46
			& 2.236 & 0.276 & \underline{0.867} & 22.85
			& 1.103 & 0.474 & 0.747 & 16.67 \\
			AGSDNet~\cite{9755131}
			& 2.738 & 0.626 & 0.600 & \underline{34.84}
			& 1.463 & \underline{0.697} & 0.697 & \underline{30.60}
			& 1.137 & \underline{0.847} & 0.743 & 19.89 \\
			SIFSDNet~\cite{9883415}
			& 3.019 & 0.599 & 0.567 & 35.70
			& 1.716 & 0.614 & 0.796 & 31.48
			& 1.290 & 0.721 & \underline{0.812} & 20.86 \\
			MONet~\cite{9324183}
			& \textbf{4.548} & 0.497 & 0.455 & 37.71
			& 2.268 & 0.457 & \textbf{0.885} & 32.73
			& \textbf{1.577} & 0.638 & \textbf{0.823} & 21.96 \\
			RDDPM~\cite{10641283}
			& 2.795 & 0.614 & 0.583 & 37.17
			& 1.194 & 0.495 & 0.572 & 28.60
			& 0.788 & 0.661 & 0.657 & 18.82 \\
			CL-SAR~\cite{FANG2024376}
			& 2.802 & 0.226 & 0.619 & 37.90
			& 1.494 & 0.266 & 0.544 & 32.53
			& 1.057 & 0.200 & 0.498 & 20.37 \\
			S$^3$DIP~\cite{albisani2025self}
			& 3.767 & 0.081 & 0.643 & 44.35
			& 2.195 & 0.088 & 0.593 & 35.69
			& 0.909 & 0.065 & 0.481 & 22.24 \\
			$\gamma$-Bridge
			& \underline{3.812} & \underline{0.652} & \underline{0.647} & \textbf{34.20}
			& \underline{2.352} & \textbf{0.734} & 0.580 & \textbf{29.84}
			& \underline{1.404} & \textbf{0.913} & 0.629 & 19.91 \\
			\bottomrule
		\end{tabularx}
	\end{table*}
	
	\begin{table*}[!t]
		\scriptsize
		\centering
		\caption{Airborne SAR: no-reference metrics (ENL, EPI, SQI, Mean).
			\label{tab:cmp-airborne}}
		\begin{tabularx}{0.8\linewidth}{>{\raggedright\arraybackslash}p{1.6cm}CCCCCCCCCCCC}
			\toprule
			\multirow{2}{*}{} &
			\multicolumn{4}{c}{\textbf{miniSAR} (Mean$=$13.32)} &
			\multicolumn{4}{c}{\textbf{FARAD-X} (Mean$=$10.05)} &
			\multicolumn{4}{c}{\textbf{FARAD-Ka} (Mean$=$9.63)} \\
			& ENL & EPI & SQI & Mean
			& ENL & EPI & SQI & Mean
			& ENL & EPI & SQI & Mean \\
			\midrule
			ANLM~\cite{4359947}
			& 4.438 & 0.172 & 0.519 & \phantom{0}9.82
			& 0.765 & 0.343 & 0.658 & \phantom{0}9.58
			& 1.151 & 0.305 & 0.803 & \phantom{0}9.21 \\
			DIP~\cite{Ulyanov_2018_CVPR}
			& 3.817 & 0.028 & 0.655 & 29.24
			& 0.988 & 0.015 & 0.477 & 13.65
			& 1.584 & 0.028 & 0.503 & 14.41 \\
			DnCNN~\cite{7839189}
			& 8.538 & 0.141 & 0.726 & 13.52
			& 0.847 & 0.330 & 0.703 & \underline{10.24}
			& 1.225 & 0.296 & 0.837 & \phantom{0}9.82 \\
			SARCAM~\cite{ko2021sar}
			& \textbf{9.204} & 0.023 & \textbf{0.755} & 12.00
			& 0.952 & 0.080 & 0.752 & \phantom{0}9.46
			& 1.099 & 0.359 & 0.773 & \phantom{0}9.28 \\
			SAR2SAR~\cite{sar2sar}
			& 5.872 & 0.023 & 0.592 & 12.97
			& \textbf{2.343} & 0.052 & \textbf{0.888} & 14.93
			& \textbf{2.061} & 0.081 & \textbf{0.928} & 10.99 \\
			AGSDNet~\cite{9755131}
			& 6.013 & \textbf{0.638} & 0.610 & \textbf{13.24}
			& 0.668 & \underline{0.887} & 0.606 & \phantom{0}9.86
			& 0.967 & \underline{0.680} & 0.724 & \phantom{0}\underline{9.46} \\
			SIFSDNet~\cite{9883415}
			& \underline{8.680} & 0.074 & 0.733 & 14.04
			& 1.056 & 0.104 & \underline{0.807} & 10.54
			& 1.199 & 0.520 & 0.827 & 10.09 \\
			MONet~\cite{9324183}
			& 8.449 & 0.132 & 0.724 & 14.30
			& 0.929 & 0.261 & 0.746 & 11.05
			& 1.382 & 0.312 & \underline{0.887} & 10.47 \\
			RDDPM~\cite{10641283}
			& 2.385 & \underline{0.545} & 0.382 & 11.65
			& 0.379 & 0.389 & 0.406 & \phantom{0}7.94
			& 0.563 & 0.360 & 0.449 & \phantom{0}7.41 \\
			CL-SAR~\cite{FANG2024376}
			& 7.478 & 0.016 & 0.730 & 15.64
			& 0.685 & 0.111 & 0.445 & 10.65
			& 0.939 & 0.097 & 0.462 & \phantom{0}9.94 \\
			S$^3$DIP~\cite{albisani2025self}
			& 3.744 & 0.006 & 0.656 & 42.40
			& 1.232 & 0.008 & 0.512 & 20.61
			& \underline{1.524} & 0.026 & 0.530 & 20.46 \\
			$\gamma$-Bridge
			& 6.743 & 0.491 & \underline{0.736} & \underline{13.13}
			& \underline{1.480} & \textbf{0.910} & 0.559 & \textbf{\phantom{0}9.87}
			& 1.251 & \textbf{0.720} & 0.588 & \textbf{\phantom{0}9.53} \\
			\bottomrule
		\end{tabularx}
	\end{table*}
	
	\noindent \textbf{Target-$L$ output control (stopping).}
	To produce a $\Lout$-look output for any $\Lout \in [\Lobs, \Lmax]$, select $t^\star = \arg\min_t |L(t) - \Lout|$ and terminate the reverse chain at $t^\star$ rather than at step $0$. Every intermediate $x_{t}$ is a physically valid $L(t)$-look image (Proposition~\ref{prop:forward-marg}), and the deterministic reverse tracks the mean of $x_0 \cdot \Gam(\Lout, \Lout)$.
	
	\smallskip
	
	\noindent \textbf{Smart-start input control.}
	Symmetrically, given an input $y$ at look number $\Lin \in [\Lobs, \Lmax]$ (the case $\Lin = \Lobs$ reduces to the standard reverse from $t^\star = T{-}1$), select $t^\star = \arg\min_t |L(t) - \Lin|$ and initialise the reverse at $x_{t^\star} \leftarrow y$, running the chain from $t^\star$ down to the desired output step. Because $y$ is by construction an in-distribution sample of $q(x_{t^\star} \mid x_0)$, the reverse is correctly anchored without any retraining.
	
	Combining the two, a single training at $\Lobs$ supports zero-shot  denoising for any $(\Lin, \Lout)$ pair with $\Lobs \le \Lin \le \Lmax$ and $\Lin \le \Lout \le \Lmax$.
	
	\smallskip
	
	\noindent \textbf{Estimating $\Lin$ for real deployment.}
	Smart-start requires an estimate of the input's effective look number. On synthetic data $\Lin$ is known by construction; on real SAR the sensor and processing chain (SLC, multi-look GRD, temporal averages) fix a nominal $\Lin$ that can vary across scenes and is rarely reported alongside individual tiles. We use a homogeneous-patch estimator: slide $32{\times}32$ windows at stride $16$ over the image, compute the local ENL $(\bar{x}/\hat{\sigma}_x)^2$ per patch, and take the $90^{\text{th}}$ quantile as $\hat L$. The top-decile patches are the flattest regions in the scene, where measured variance is dominated by speckle rather than by underlying reflectivity structure, so the estimator tracks the effective look number rather than the scene-plus-noise variance picked up by a whole-image estimator (which is biased downward by scene texture). At inference we set $\Lin = \hat L$ and smart-start at $t^\star = \arg\min_t |L(t) - \hat L|$. The estimator is a one-shot O(HW) computation that adds negligible overhead relative to the reverse chain, and it is only needed at inference; the training loop is unchanged.

	
	\section{Experiments}
	\label{sec:exp}
	
	\subsection{Datasets and Metrics}
	
	\noindent \textbf{Training data.} Training uses only BSDS500 train+val and DIV2K train HR natural images: $1{,}100$ $256{\times}256$-pixel crops. Eq.~\ref{eq:speckle} generates $\Lobs = 1$ observations; no real SAR imagery is used, matching prior synthetic regimes ~\cite{guha2023sddpm,sarcnn,heo2026self}. 
	
	\smallskip
	
	\noindent \textbf{Synthetic evaluation.} Evaluation uses 64 in-domain held-out BSDS500 test crops at $\Lobs=1$ and cross-domain Kodak24, Set12, and McMaster images synthesized by Eq.~\ref{eq:speckle} at $\Lobs\in\{1,2,4,8,16\}$, testing generalization beyond the training distribution.
	
	\smallskip
	
	\noindent \textbf{Real-SAR evaluation.} Zero-shot tests cover 33 tiles from six spaceborne/airborne C-/X-/Ka-band sensors: Sentinel-1 IW, TerraSAR-X, Gaofen-3, miniSAR, FARAD Ka-band, FARAD X-band. Following~\cite{10109106,guha2023sddpm,heo2026self}, released PNG/JPG products serve as intensities. Section~\ref{sec:method-runtime} provides tile-wise $\hat L$ for smart-start, without domain adaptation.
	
	\smallskip
	
	\noindent\textbf{Metrics.}
	Synthetic evaluation uses PSNR/SSIM against $x_0$. Distributional evaluation assesses the residual ratio $r = \xobs/\hat{x}_0$ via its mean, variance, and a Kolmogorov--Smirnov test against $\Gam(\Lobs,\Lobs)$. Real-SAR evaluation is no-reference and uses four homogeneous-patch metrics computed over $32{\times}32$ windows at stride $16$ ($90^{\mathrm{th}}$ percentile across patches): $\mathrm{ENL} = (\mu/\sigma)^2$, $\mathrm{EPI}$, $\mathrm{SQI}$, and $\mathrm{Mean}$.
	
	\smallskip
	
	\noindent \textbf{Model.}
	A fully convolutional U-Net with 64 base channels, multipliers $(1,2,4)$, and no positional embeddings is trained on one RTX~3080 GPU. An exponential $T{=}100$ schedule spans $\Lobs{=}1$ to $\Lmax{=}10^4$, with $(\lambda_{\mathrm{rec}},\lambda_{\mathrm{ratio}},\lambda_{\mathrm{cons}})=(10,1,5)$. Deterministic reverse is the sampling default.

	\begin{table}[t]
		\centering
		\scriptsize
		\caption{Multi-step reverse at $\Lobs{=}1$ on 64 BSDS500 crops
			(deterministic, ckpt\_15k EMA).}
		\label{tab:multistep-l1}
		\begin{tabular}{@{}lcccc@{}}
			\toprule
			NFE & PSNR (dB) & SSIM & ratio mean & KS $p$ \\
			\midrule
			1  & 21.40 & 0.446 & 0.994 & 1.3e-5 \\
			\textbf{5}  & \textbf{22.56} & \textbf{0.556} & \textbf{0.998} & \textbf{0.42} \\
			25 & 22.05 & 0.551 & 0.947 & 1.3e-14 \\
			\bottomrule
		\end{tabular}
	\end{table}

	\subsection{Comparison with prior work}
	\label{sec:exp-baselines}
	
	Twelve methods are compared on Kodak24, Set12, McMaster, and Sentinel-1, Gaofen-3, TerraSAR-X, miniSAR/FARAD-X/FARAD-Ka real-SAR tiles. Identical-release baselines~\cite{hu2024sar} comprise ANLM~\cite{4359947}, DIP~\cite{Ulyanov_2018_CVPR}, DnCNN~\cite{7839189}, SARCAM~\cite{ko2021sar}, AGSDNet~\cite{9755131}, SIFSDNet~\cite{9883415}, MONet~\cite{9324183}, SARtrans~\cite{9884596}, SAR2SAR~\cite{sar2sar}, CL-SAR~\cite{FANG2024376}, S$^3$DIP~\cite{albisani2025self}, and RDDPM~\cite{10641283}. $\gamma$-Bridge uses one BSDS500+DIV2K $\Lobs{=}1$ checkpoint; smart-start uses true synthetic $\Lobs$ or Section~\ref{sec:method-runtime}'s real-SAR homogeneous-patch $\hat L$, followed by post-hoc normalization. No fine-tuning, sensor-specific tuning, or clean references are used.
	
	\smallskip
	
	\noindent \textbf{Synthetic.} Tables~\ref{tab:cmp-set12}--\ref{tab:cmp-kodak24} report PSNR (dB) and SSIM ($\times100$) for $\Lobs\in\{1,2,4,8\}$. Figure~\ref{fig:cmp-synth} shows finer recovery from $\Lin$ $1$ to $16$ by one $\Lobs{=}1$-trained model.
	
	Across three benchmarks, \(\gamma\)-Bridge leads McMaster/Kodak24 for \(\Lobs \le 4\) and is competitive on Set12. At \(\Lobs=1\), $\gamma$-Bridge matches or leads on the former two but trails nonlocal-sparse methods on piecewise-smooth Set12, which favors sparse priors. Its McMaster \(\Lobs=8\) margins over the runner-up---\(0.40\) dB and \(4.91\) SSIM---demonstrate generalization to rich photographic textures.
	
	\smallskip
	
	\noindent \textbf{Real SAR.} Following~\cite{hu2024sar}, Tables~\ref{tab:cmp-spaceborne} and~\ref{tab:cmp-airborne} give ENL, EPI, SQI, and Mean; Figures~\ref{fig:cmp-real1}--\ref{fig:cmp-real3} compare sensors. EPI retains its reference name; all denoised-output metrics use the exact reference formulas.
	
	\smallskip
	
	\noindent \textbf{Trade-off summary.}
	For $\Lobs \in \{1,2,4,8\}$ on three synthetic benchmarks, $\gamma$-Bridge leads McMaster/Kodak24 PSNR at $\Lobs \le 4$ and SSIM at all $\Lobs$ levels (e.g.\ $\Lobs{=}8$: McMaster $28.89$\,dB / $86.9$; Kodak24 $25.85$\,dB / $77.6$). $\gamma$-Bridge is competitive on Set12 at $\Lobs \ge 2$ but trails MONet at $\Lobs{=}1$ ($18.77$ vs.\ $21.05$\,dB), where piecewise-smooth content favors nonlocal sparse priors over photograph-trained diffusion.
	
	Real-SAR performance is sensor-dependent. On the three C/X-band spaceborne sensors closest to the synthetic training-noise model, $\gamma$-Bridge ranks second in ENL across the board: Sentinel-1 ($3.81$ vs.\ MONet $4.55$), TerraSAR-X ($2.35$ vs.\ DIP $4.20$), and Gaofen-3 ($1.40$ vs.\ MONet $1.58$), while preserving noisy-input means (Mean\,=\,$34.20$, $29.84$, $19.91$, all within $1\%$ of the noisy references). On the three airborne sensors $\gamma$-Bridge is mid-pack: it outperforms SAR2SAR on miniSAR ($6.74$ vs.\ $5.87$), ranks second only on FARAD-X ($1.48$ vs.\ SAR2SAR $2.34$), and trails SAR2SAR on FARAD-Ka. The remaining gap reflects out-of-model high-frequency structure and dark-tail intensities characteristic of high-resolution airborne SAR; repetitive fine texture also over-shoots the smart-start estimator $\hat L$. Small-set airborne fine-tuning or a randomized-$\Lobs$ variant (Appendix~\ref{app:random-Lobs}) may close it but remains future work.
	
	The consistently lower SQI than edge-preserving sparse baselines reflects $\gamma$-Bridge's smart-start reverse suppressing noisy input gradients, which reduces overall output-gradient magnitude---this trades off against the gradient-based EPI metric but does not indicate loss of structural content, as evident in Figures~\ref{fig:cmp-real1}--\ref{fig:cmp-real3}. Runtime $(\Lin,\Lout)$ controls (Sections~\ref{sec:exp-target-L} and~\ref{sec:exp-smart-start}) make $\gamma$-Bridge the first controllable-look-number SAR despeckler, yielding the strongest reported PSNR at $\Lobs \le 4$ on the McMaster and Kodak24 benchmarks despite the acknowledged airborne-SAR domain gap.
	
	\begin{table}[t]
		\centering
		\scriptsize
		\caption{Ratio-image statistics of $r = x_{\mathrm{obs}}/x_{t^\star}$ vs.\ target look $L_{\mathrm{out}}$ on 64 BSDS500 crops. Rightmost column: stochastic Beta-coupled theoretical reference $(L_{\mathrm{out}}{-}1)/(L_{\mathrm{out}}{+}1)$ (see Section~\ref{sec:exp-target-L} for interpretation of the small-$L_{\mathrm{out}}$ under-shoot).}
		\label{tab:target-L}
		\begin{tabular}{@{}lccc@{}}
			\toprule
			$\Lout$ & ratio mean & ratio var (emp.) & $(\Lout{-}1)/(\Lout{+}1)$ (theo.) \\
			\midrule
			2       & 0.779 & 0.195 & 0.333 \\
			4       & 0.825 & 0.368 & 0.600 \\
			16      & 0.925 & 0.706 & 0.882 \\
			66      & 0.968 & 0.895 & 0.970 \\
			266     & 0.986 & 0.971 & 0.993 \\
			10000   & 1.001 & 1.022 & 1.000 \\
			\bottomrule
		\end{tabular}
	\end{table}
	
	\begin{figure*}[t]
		\centering
		\includegraphics[width=0.8\textwidth]{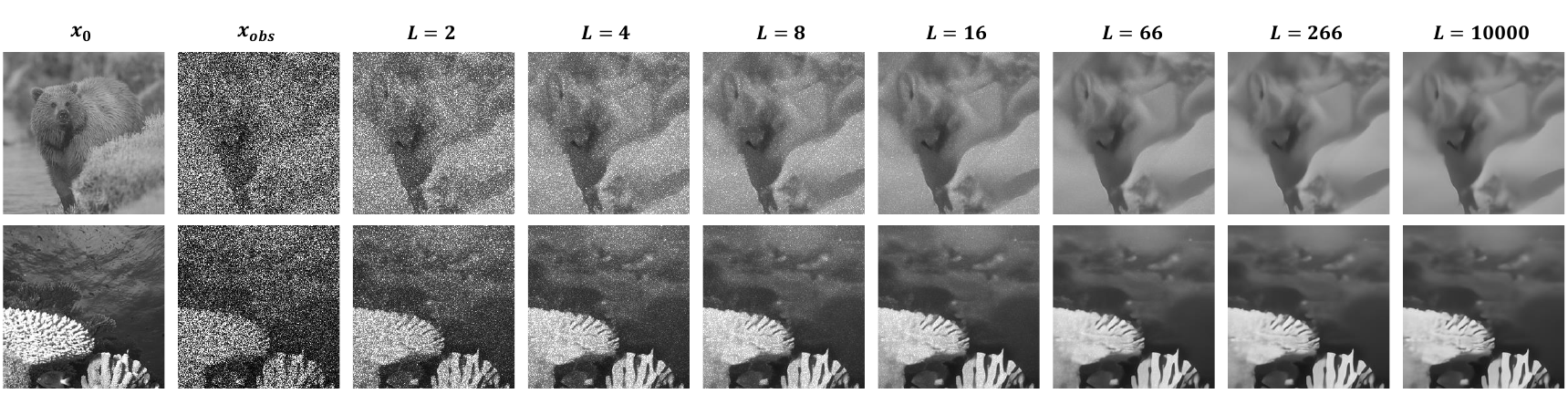}
		\caption{Target-$L$ output control from a single model.
			From left: reference $x_0$, observation, and outputs at
			$\Lout \in \{2, 4, 8, 16, 66, 266, 10000\}$.}
		\label{fig:target-L}
	\end{figure*}
	
	\subsection{Multi-step reverse at $\Lobs=1$}
	\label{sec:exp-multistep}
	
	Table~\ref{tab:multistep-l1} reports PSNR at NFE $\in \{1, 5, 25\}$ on 64 BSDS500 test crops at $\Lobs = 1$. Five reverse steps yield $22.56$\,dB, a $+1.16$\,dB gain over the one-step baseline, indicating that a capacity-limited $\Lobs{=}1$ predictor benefits from decomposing restoration into multiple stages of the Gamma--L\'evy posterior. At 25 steps, PSNR drops by $0.51$\,dB relative to 5 steps and the sample-mean ratio falls to $0.947$, revealing a systematic $\approx\!5\%$ over-shrinkage; we attribute this to accumulated train--inference joint mismatch in the $(x_t, x_{\text{obs}})$ conditioning distribution. Figure~\ref{fig:multistep} visualizes the trade-off across NFE.

	\begin{figure}[t]
		\centering
		\includegraphics[width=0.85\linewidth]{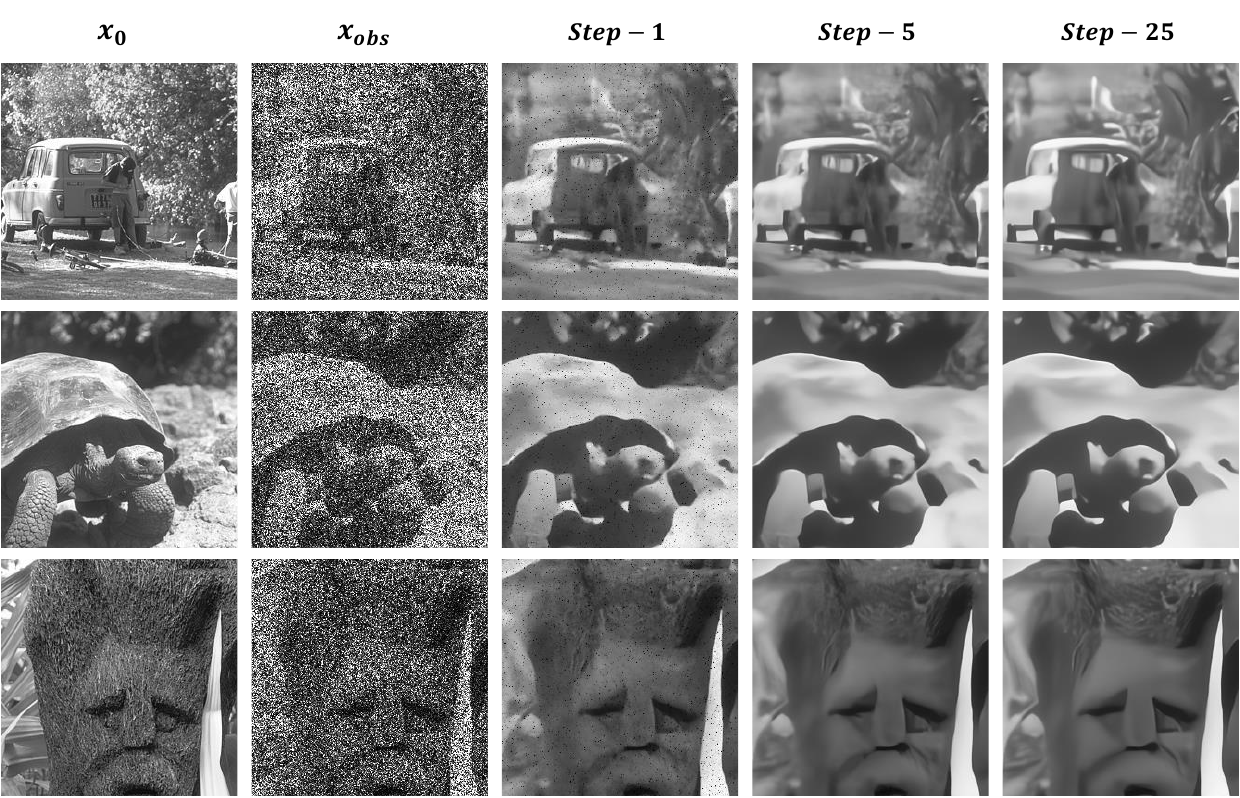}
		\caption{Multi-step reverse at $\Lobs{=}1$ on three BSDS500 crops.
			Left to right: clean $x_0$, noisy $\xobs$, and outputs at NFE $\in
			\{1, 5, 25\}$ (deterministic).}
		\label{fig:multistep}
	\end{figure}

	\subsection{Target-$L$ output control}
	\label{sec:exp-target-L}
	
	From one $\Lobs{=}1$ input, stopping the reverse chain at $\Lout \in \{2, 4, 8, 16, 66, 266, 10000\}$ produces the outputs in Fig.~\ref{fig:target-L}, showing monotonic speckle reduction across the target look range. Table~\ref{tab:target-L} reports mean and variance of $r = \xobs / x_{t^\star}$. Because $\xobs$ and $x_{t^\star}$ lie on the same trajectory, $r$ is coupled rather than a residual: under the stochastic reverse (Eq.~\eqref{eq:posterior-stoch}) with $\hat{x}_0{=}x_0$ and Prop.~\ref{prop:levy}, $r \sim (\Lout/\Lobs)\,\mathrm{Beta}(\Lobs, \Lout{-}\Lobs)$ has variance $(\Lout{-}1)/(\Lout{+}1)$ at $\Lobs{=}1$, differing from the naive forward-marginal $1/\Lout$ that assumes independence. Empirical variance approaches this reference monotonically; the under-shoot at small $\Lout$ reflects the deterministic reverse preserving only the mean (Thm.~\ref{thm:marginal}) and mild non-oracle predictor smoothing.

	\subsection{Smart-start input control}
	\label{sec:exp-smart-start}
	
	Clean-reference synthetic inputs at $\Lin \in \{1,2,4,8,16,32\}$ test the same $\Lobs{=}1$-trained model. Naive-start injects the $\Lin$-look input at $t{=}T{-}1$, \textit{i.e.}\ treats it as if it were a single-look observation; smart-start instead sets $x_{t^\star} \leftarrow y$ with $L(t^\star) \approx \Lin$, anchoring the input at its matching bridge state.
	
	At $\Lin{=}4$ the two strategies yield $24.97$ and $12.44$~dB on the same test crops; the $12.5$~dB gap quantifies the cost of treating an out-of-distribution input as $\Lobs$-look. Smart-start also gains $+2.42$~dB over the $\Lin{=}1$ baseline, consistent with the additional signal content in a 4-look input, showing that smart-start uses this content rather than discarding it.
	
	Across the six $\Lin$ levels of Table~\ref{tab:smart-start}, PSNR rises from $22.55$~dB (single-look) to $29.31$~dB (32-look) and SSIM from $0.554$ to $0.850$, all from a single training run without per-$\Lin$ fine-tuning. The ratio-image variance matches the theoretical forward-marginal $1/\Lin$ closely (rightmost columns of Table~\ref{tab:smart-start}, empirical $1.002 \to 0.028$ against theory $1.000 \to 0.031$); note that this reference differs from the Beta-coupled one used in Section~\ref{sec:exp-target-L} because $y$ is an external input independent of the internal reverse chain rather than lying on the same trajectory. Fig.~\ref{fig:cmp-synth} illustrates the qualitative progression: mild residual speckle at $\Lin{=}1$, recovered mid-scale texture at $\Lin{=}4$, and fine high-frequency detail at $\Lin{=}16$. Because higher-$\Lin$ inputs contain strictly more information about $x_0$, smart-start resolves rather than hallucinates this detail; the ratio-statistic check confirms radiometric fidelity throughout.

	\subsection{Ablation studies}
	\label{sec:exp-ablation}
	
	\begin{table}[t]
		\centering
		\scriptsize
		\caption{Smart-start input control at $\Lout\!\to\!\Lmax$: PSNR /
			SSIM / ratio-variance vs.\ input look $\Lin$ (64 crops, NFE=5
			deterministic, ckpt\_15k EMA).}
		\label{tab:smart-start}
		\begin{tabular}{@{}lcccccc@{}}
			\toprule
			$\Lin$ & step & PSNR in & PSNR out & SSIM out & ratio var & $1/\Lin$ \\
			\midrule
			1  & 99 &  9.78 & 22.55 & 0.554 & 1.002 & 1.000 \\
			2  & 92 & 11.57 & 23.69 & 0.613 & 0.484 & 0.500 \\
			4  & 84 & 13.69 & 24.97 & 0.679 & 0.241 & 0.250 \\
			8  & 77 & 16.10 & 26.28 & 0.739 & 0.120 & 0.125 \\
			16 & 69 & 18.74 & 27.76 & 0.800 & 0.058 & 0.063 \\
			32 & 62 & 21.52 & 29.31 & 0.850 & 0.028 & 0.031 \\
			\bottomrule
		\end{tabular}
	\end{table}

	On 64 BSDS500 crops, Table~\ref{tab:ablation} ablates four design components: the closed-form Gamma--L\'evy reverse posterior, joint observation-conditioning and two-step consistency, the log-residual output head, and the log-linear $L(t)$ schedule, with NFE $\in \{1,5,25\}$ deterministic reverse.
	
	\smallskip
	\noindent\textbf{Closed-form posterior.}
	Replacing the Gamma--L\'evy reverse of Eq.~\eqref{eq:posterior-stoch} with naive direct $\hat{x}_0$ propagation (ignoring the current state $x_t$) reduces NFE=5 PSNR by $1.4$~dB ($21.13$ vs.\ $22.56$) and the multi-step gain (NFE=25 vs.\ NFE=1) from $+0.65$ to $+0.47$~dB, because later reverse steps then lack information about where they are on the bridge.
	
	\smallskip
	\noindent\textbf{Observation conditioning and consistency.}
	Without $\xobs$ conditioning and $\mathcal{L}_{\mathrm{cons}}$, single-step L1-only training gives a comparable NFE=1 PSNR ($20.08$~dB) but NFE=25 PSNR collapses to $15.78$~dB (a $-4.30$~dB gap). The two components together anchor the reverse trajectory to the observation and enforce cross-step prediction consistency, keeping deep multi-step reverse chains on-manifold.
	
	\smallskip
	\noindent\textbf{Log-residual parameterisation.}
	Replacing the log-residual head $\hat{x}_0 = x_t \cdot \exp(G_\theta(\cdot))$ with a linear-residual head $\hat{x}_0 = \mathrm{ReLU}(x_t + G_\theta(\cdot))$ reduces NFE=5 PSNR to $6.47$~dB, a $16.1$~dB drop. The log-domain parameterisation converts multiplicative Gamma noise into an additive residual, matching the analytical form $\log(x_0/x_t) = -\log(G_t/L(t))$ (Remark~\ref{rem:log-residual}); the linear counterpart is a substantially harder optimisation target and fails to converge.
	
	\smallskip
	\noindent\textbf{Look-number schedule.}
	A linear $L(t)$ schedule lowers NFE=5 PSNR by $4.5$~dB. Because the dynamic range spans four orders of magnitude ($\Lobs{=}1$ to $\Lmax{=}10^{4}$), a linear schedule assigns almost the entire variance reduction to a few early steps, exceeding what a single network step can plausibly represent. The log-linear schedule distributes multiplicative variance reduction geometrically across steps, matching the natural evolution of Gamma marginals.
	
	\smallskip
	\noindent\textbf{Deterministic vs.\ stochastic reverse.} On the same $\Lobs{=}1$ checkpoint over 64 BSDS500 crops (Table~\ref{tab:otode-vs-stoch}), the deterministic reverse leads on PSNR at every NFE but its ratio moments drift at large NFE, whereas the stochastic reverse trades ${\sim}0.5$~dB for $\widehat{\mathrm{Var}}(r)$ within $8\%$ of $1$ throughout---matching Thm.~\ref{thm:marginal} (deterministic preserves the mean; stochastic preserves the full marginal).
	
	\begin{table}[t]
		\centering
		\scriptsize
		\caption{Deterministic vs.\ stochastic reverse at $\Lobs{=}1$.}
		\label{tab:otode-vs-stoch}
		\begin{tabular}{@{}llccc@{}}
			\toprule
			NFE & Reverse & PSNR & $\bar r$ & $\widehat{\mathrm{Var}}(r)$ \\
			\midrule
			\multirow{2}{*}{5}
			& determ. & \textbf{21.21} & \textbf{0.998} & \textbf{1.002} \\
			& stoch.  & 20.77 & 1.016 & 1.053 \\
			\multirow{2}{*}{25}
			& determ. & \textbf{20.71} & 0.947 & 0.907 \\
			& stoch.  & 19.96 & \textbf{1.025} & \textbf{1.075} \\
			\bottomrule
		\end{tabular}
	\end{table}
	
	\begin{figure}[t]
		\centering
		\includegraphics[width=\linewidth]{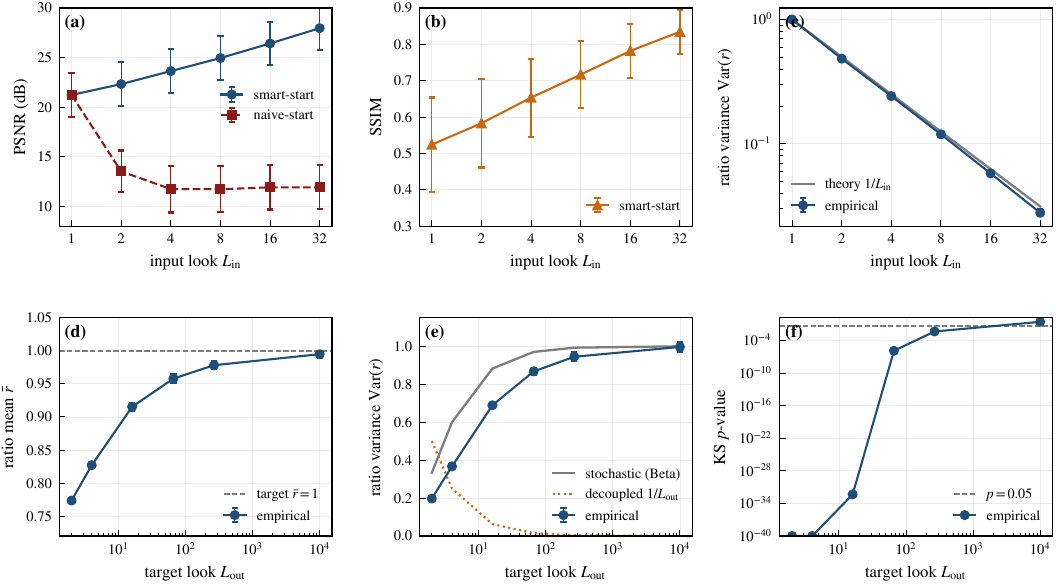}
		\caption{Runtime look-number control (error bars: $\pm 1\sigma$; ckpt\_15k EMA, NFE${=}5$ deterministic). \textbf{Top row}, smart-start along input axis $L_{\mathrm{in}}$: (a) PSNR of smart-start vs.\ naive-start; (b) SSIM; (c) empirical ratio variance vs.\ the theoretical $1/L_{\mathrm{in}}$. \textbf{Bottom row}, target-$L$ along output axis $L_{\mathrm{out}}$: (d) ratio mean $\bar{r} = x_{\mathrm{obs}}/x_{t^\star}$; (e) ratio variance against the stochastic Beta-coupled reference $(L_{\mathrm{out}}{-}1)/(L_{\mathrm{out}}{+}1)$ and the decoupled forward-marginal $1/L_{\mathrm{out}}$; (f) KS $p$-value of $r$ against $\Gamma(1,1)$, the $L_{\mathrm{out}} \to \infty$ limit of the Beta reference---values at intermediate $L_{\mathrm{out}}$ should be read as trend only.}
		\label{fig:lookcontrol-combo}
	\end{figure}

	\section{Discussion and Limitations}
	\label{sec:discussion}
	
	\noindent\textbf{Bayes limit at $\Lobs{=}1$.} Near-$0$~dB per-pixel SNR makes $p(x_0 \mid \xobs)$ intrinsically high-variance; any pointwise estimator must over-smooth. Smart-start recovers fine detail once the input carries enough signal ($\Lin \ge 4$, Table~\ref{tab:smart-start}). This is a limit of the observation model, not the algorithm.
	
	\smallskip
	\noindent\textbf{Train--inference mismatch.} During training $(x_t, c{=}\xobs)$ are independent given $x_0$, but at inference $x_t$ is generated by iterating the reverse chain from $c$, and smart-start additionally shifts the marginal of $c$ from $\Lobs$ to $\Lin$. The resulting joint deviation accumulates over multi-step reverse and manifests as the $\bar r{=}0.947$ shrinkage and $-0.51$~dB regression at NFE${=}25$ (Table~\ref{tab:multistep-l1}). Randomizing $\Lobs$ during training or raising $\lambda_{\mathrm{cons}}$ after $\mathcal{L}_{\mathrm{rec}}$ saturates are natural remedies, left to future work.
	
	
	\smallskip
	\noindent\textbf{Beyond SAR.} The construction generalizes to any scale-parameter exponential-family observation with a physical index analogous to $L$: ultrasound envelopes, photon-limited imaging, and low-light microscopy. Extending to Poisson requires only replacing Lemma~\ref{lem:gamma-add} with Poisson additivity.

	\begin{table}[t]
		\centering
		\scriptsize
		\caption{Component ablation at $\Lobs{=}1$: PSNR (dB) at NFE $\in
			\{1,5,25\}$; Gap $=$ NFE25$-$NFE1 (negative $=$ diverging).}
		\label{tab:ablation}
		\begin{tabular}{@{}lcccr@{}}
			\toprule
			Variant & NFE=1 & NFE=5 & NFE=25 & Gap \\
			\midrule
			Full model                                  & \textbf{21.40} & \textbf{22.56} & \textbf{22.05} & $\mathbf{+0.65}$ \\
			\midrule
			$-$ closed-form posterior (naive)           & 20.03 & 21.13 & 20.50 & $+0.47$ \\
			$-$ cond$_{x_1}$ $-$ consistency           & 20.08 & 20.05 & 15.78 & $-4.30$ \\
			$-$ log schedule (linear $L(t)$)            & 17.89 & 18.02 & 18.30 & $+0.41$ \\
			$-$ log-residual (linear residual)          & \phantom{0}5.15 & \phantom{0}6.47 & \phantom{0}6.67 & $+1.52$ \\
			\bottomrule
		\end{tabular}
	\end{table}
	
	\section{Conclusion}
	\label{sec:conclusion}
	
	We introduced $\gamma$-Bridge, a look-parametric diffusion bridge for multiplicative Gamma image restoration, with SAR despeckling as its primary validation setting. Anchored on an exact Gamma--L\'evy kernel and indexed by a look-number schedule, the model exposes the noise strength $L$ as a runtime-controllable dimension in both the input and output directions. A single training at $\Lobs{=}1$ supports zero-shot restoration over the full admissible $(\Lin, \Lout)$ grid---a capability absent from both prior Gamma denoisers and prior SAR-specific despecklers. The core theoretical contribution---a closed-form Gamma--L\'evy reverse posterior with two-step consistency training---enables stable multi-step inference in the low-SNR regime and improves reconstruction quality over one-step baselines. On real SAR, combining the trained model with a homogeneous-patch $\hat L$ estimator yields zero-shot transfer across six spaceborne and airborne sensors without fine-tuning.

	\appendices
	
	\section{Proof of Lemma~\ref{lem:gamma-add} (Gamma Additivity)}
	\label{app:gamma-add}
	
	Let $G_j \sim \Gam(\alpha_j, \beta)$ be independent for $j\in\{1,2\}$
	with common rate $\beta > 0$. The characteristic function of
	$\Gam(\alpha,\beta)$ is
	\begin{equation}
		\varphi_G(s) \;=\; \EE\bigl[e^{isG}\bigr] \;=\;
		\bigl(1 - is/\beta\bigr)^{-\alpha},
		\quad s\in\RR.
	\end{equation}
	By independence, $\varphi_{G_1+G_2}(s) = \varphi_{G_1}(s)\varphi_{G_2}(s)
	= (1 - is/\beta)^{-(\alpha_1+\alpha_2)}$, which is the characteristic
	function of $\Gam(\alpha_1 + \alpha_2, \beta)$. Since the
	function determines the law uniquely, $G_1 + G_2 \sim
	\Gam(\alpha_1 + \alpha_2, \beta)$. \hfill$\square$
	
	Note the critical role of a common rate: if $\beta_1 \neq \beta_2$
	the sum is not Gamma in general. This is why the L\'evy
	decomposition of Section~\ref{sec:method-levy} matches rates to $1$
	before summing, and only afterwards rescales via
	Lemma~\ref{lem:gamma-scale} to obtain the target form.

	\section{Proof of Theorem~\ref{thm:marginal} (Marginal Preservation)}
	\label{app:posterior}
	Assume the oracle predictor $\hat{x}_0 = x_0$. We prove the stochastic form first, then the deterministic form as a corollary.
	
	\smallskip
	\noindent\textbf{Stochastic reverse.}
	By hypothesis $x_t = (x_0/L(t))\,G_t$ with $G_t \sim \Gam(L(t), 1)$ (the forward marginal in Eq.~\eqref{eq:forward-def}), and by Eq.~\eqref{eq:posterior-stoch} with $\hat{x}_0 = x_0$,
	\begin{align}
		x_{t-1}
		\;&=\; \alpha_t x_t + \frac{x_0}{L(t{-}1)}\, Y_{t\to t{-}1} \nonumber\\
		\;&=\; \frac{L(t)}{L(t{-}1)} \cdot \frac{x_0}{L(t)}\,G_t + \frac{x_0}{L(t{-}1)}\, Y_{t\to t{-}1} \nonumber\\
		\;&=\; \frac{x_0}{L(t{-}1)}\,\bigl(G_t + Y_{t\to t{-}1}\bigr).
		\label{eq:app-marg-1}
	\end{align}
	By construction $G_t \sim \Gam(L(t), 1)$ and $Y_{t\to t{-}1} \sim \Gam(L(t{-}1) - L(t), 1)$ are independent with common rate $1$. Lemma~\ref{lem:gamma-add} then gives
	\begin{equation}
		G_t + Y_{t\to t{-}1} \;\sim\; \Gam\bigl(L(t{-}1),\, 1\bigr).
		\label{eq:app-marg-sum}
	\end{equation}
	Substituting Eq.~\eqref{eq:app-marg-sum} back into Eq.~\eqref{eq:app-marg-1} and applying Lemma~\ref{lem:gamma-scale} with $c = x_0 / L(t{-}1)$,
	\begin{equation}
		\begin{aligned}
			x_{t-1} \;&\sim\; \frac{x_0}{L(t{-}1)}\, \Gam\bigl(L(t{-}1),\, 1\bigr) \\
			\;&=\; x_0 \cdot \Gam\bigl(L(t{-}1),\, L(t{-}1)\bigr),
		\end{aligned}
		\label{eq:app-marg-final}
	\end{equation}
	which is precisely the forward marginal of Prop.~\ref{prop:forward-marg} at step $t{-}1$.
	
	\smallskip
	\noindent\textbf{Deterministic reverse.}
	Taking the expectation of Eq.~\eqref{eq:posterior-stoch} conditional on $(x_t, x_0)$ over the independent increment $Y_{t \to t{-}1}$, and using $\EE[Y_{t \to t{-}1}] = L(t{-}1) - L(t)$,
	\begin{equation}
		\begin{aligned}
			\EE[x_{t-1} \mid x_t, x_0]
			\;&=\; \alpha_t x_t + \frac{x_0}{L(t{-}1)}\bigl(L(t{-}1) - L(t)\bigr) \\
			\;&=\; \alpha_t x_t + (1 - \alpha_t) x_0,
		\end{aligned}
	\end{equation}
	which is exactly the deterministic update of Eq.~\eqref{eq:posterior-otode}. Taking a further expectation over $x_t$ under the forward marginal $x_t \sim x_0 \cdot \Gam(L(t), L(t))$ with $\EE[x_t] = x_0$,
	\begin{equation}
		\EE[x_{t-1}] \;=\; \alpha_t x_0 + (1 - \alpha_t) x_0 \;=\; x_0.
	\end{equation}
	Hence the deterministic reverse tracks the mean of the target marginal at every step; it does not preserve the full distributional shape, since the variance shrinks under repeated deterministic averaging---consistent with its role as a mean-tracking sampler. \qed
	
	\smallskip
	\noindent\textbf{Corollary: iterated trajectory.} By induction on $t$, the stochastic reverse chain $x_{T-1}, x_{T-2}, \ldots, x_0$ initialised at $x_{T-1} = \xobs$ under the oracle predictor visits $x_t \sim x_0 \cdot \Gam(L(t), L(t))$ at every $t$. The trajectory therefore threads through the forward marginals exactly, with $x_0$ recovered in the limit $L(t) \to \Lmax \Rightarrow \mathrm{Var}(x_t) \to 0$.

	\section{Effect of the Consistency Loss on the Gradient}
	\label{app:cons-gradient}
	
	We show that $\mathcal{L}_{\mathrm{cons}}$ (Eq.~\eqref{eq:loss-cons}) provides a training signal that $\mathcal{L}_{\mathrm{rec}}$ cannot.
	
	Consider a network $G_\theta$ with sufficient capacity to reach the pointwise $\ell_1$ minimum, i.e.\ $\hat{x}_0(x_t; t, \xobs) = \mathrm{Med}[x_0 \mid x_t, \xobs]$ at every $(x_t, t)$ visited by the forward process. At this optimum, $\nabla_\theta \mathcal{L}_{\mathrm{rec}} = 0$ and $\mathcal{L}_{\mathrm{rec}}$ can no longer guide training.
	
	Under Eq.~\eqref{eq:posterior-otode}, the $K$-step reverse composes this estimator across the schedule:
	\begin{equation}
		\begin{aligned}
			\tilde{x}_0^{(K)}(x_t) \;&=\; \underbrace{f_{K} \circ \cdots \circ f_1}_{K \text{ steps}}(x_t), \\
			f_k(x) \;&:=\; \alpha_k\, x + (1 - \alpha_k)\, \hat{x}_0(x; t_k, \xobs),
		\end{aligned}
	\end{equation}
	where $\alpha_k = L(t_k)/L(t_{k-1})$ and $t_k$ is the $k$-th step of the reverse chain. In general $\tilde{x}_0^{(K)}(x_t) \neq \hat{x}_0(x_t, \ldots)$: pointwise-median training makes each $\hat{x}_0$ prediction individually optimal, but does not constrain how these compose along a trajectory.
	
	$\mathcal{L}_{\mathrm{cons}}$ closes this gap. It probes $\hat{x}_0$ at a step $t^{\prime}$ whose input $x_{t^{\prime}}$ (Eq.~\eqref{eq:cons-x-mid}) is a deterministic function of the network's own output at $t$. Reducing $\mathcal{L}_{\mathrm{cons}}$ therefore requires $\hat{x}_0(x_{t^{\prime}}, \ldots)$ to reconstruct $x_0$ along the deterministic trajectory, not just at independently sampled points---a joint-level constraint that $\mathcal{L}_{\mathrm{rec}}$ cannot enforce. Empirically, at NFE${=}25$ vs.\ NFE${=}1$ this signal is the difference between the full model's $+0.65$~dB and the $-4.30$~dB collapse when conditioning and consistency are removed (Table~\ref{tab:ablation}).
	
	\section{Details of the Randomised-$\Lobs$ Training Scheme}
	\label{app:random-Lobs}
	
	The main results train at a fixed $\Lobs{=}1$, which induces two sources of train--inference distribution mismatch (Section~\ref{sec:discussion}): the joint $(x_t, c{=}\xobs)$ is independent during training but correlated at inference, and the marginal of $c$ shifts from $\Lobs$ to $\Lin$ whenever smart-start is used with $\Lin \neq \Lobs$. A principled fix is to randomise $\Lobs$ across mini-batches so that the conditioning channel visits the full range of look numbers that smart-start selects at inference.
	
	\smallskip
	\noindent\textbf{Training procedure.}
	At each iteration, draw $\Lobs \sim \pi$ from a distribution supported on $[1, \Lmax]$, set $L(T{-}1) = \Lobs$ (rebuilding the log-linear schedule on the fly), synthesise $\xobs = x_0 \cdot G_{\Lobs}$ with $G_{\Lobs} \sim \Gam(\Lobs, \Lobs)$, and run the usual training step. Because the network conditioning already includes $\log L(t)$, no architectural change is required.
	
	\smallskip
	\noindent\textbf{Design choices.}
	A natural choice for $\pi$ is log-uniform on $[1, 32]$: it matches both the smart-start operating range and the geometric character of the $L(t)$ schedule itself, so training coverage is uniform on the scale that the reverse chain actually traverses. An alternative is a mixture peaked at $\Lobs{=}1$ with a lighter tail toward larger $\Lobs$, which preserves training capacity at the hardest single-look setting. Because $\Lmax$ remains fixed, the schedule range $[\Lobs, \Lmax]$ is longer for small $\Lobs$; two options keep the per-batch step count comparable: a per-batch $T$ scaled with $\log(\Lmax/\Lobs)$, or a per-batch renormalisation $t \mapsto t \cdot T_{\mathrm{ref}} / T_{\Lobs}$ that maps the schedule back to a shared step index.
	
	\bibliographystyle{IEEEtran}
	\bibliography{reference}

@InProceedings{liu2023,
  title = 	 {{I}$^2${SB}: Image-to-Image Schrödinger Bridge},
  author =       {Liu, Guan-Horng and Vahdat, Arash and Huang, De-An and Theodorou, Evangelos and Nie, Weili and Anandkumar, Anima},
  booktitle = 	 {Proceedings of the 40th International Conference on Machine Learning},
  pages = 	 {22042--22062},
  year = 	 {2023},
  editor = 	 {Krause, Andreas and Brunskill, Emma and Cho, Kyunghyun and Engelhardt, Barbara and Sabato, Sivan and Scarlett, Jonathan},
  volume = 	 {202},
  series = 	 {Proceedings of Machine Learning Research},
  month = 	 {23--29 Jul},
  publisher =    {PMLR},
}

@inproceedings{ICLR2024_20e45668,
 author = {Zhou, Linqi and Lou, Aaron and Khanna, Samar and Ermon, Stefano},
 booktitle = {International Conference on Learning Representations},
 editor = {B. Kim and Y. Yue and S. Chaudhuri and K. Fragkiadaki and M. Khan and Y. Sun},
 pages = {8160--8171},
 title = {Denoising Diffusion Bridge Models},
 volume = {2024},
 year = {2024}
}

@inproceedings{ICLR2024_54912807,
 author = {Kim, Beomsu and Kwon, Gihyun and Kim, Kwanyoung and YE, Jong Chul},
 booktitle = {International Conference on Learning Representations},
 editor = {B. Kim and Y. Yue and S. Chaudhuri and K. Fragkiadaki and M. Khan and Y. Sun},
 pages = {19312--19331},
 title = {Unpaired Image-to-Image Translation via Neural Schr\"{o}dinger Bridge},
 volume = {2024},
 year = {2024}
}

@article{nachmani2021denoising,
	title={Denoising diffusion gamma models},
	author={Nachmani, Eliya and Roman, Robin San and Wolf, Lior},
	journal={arXiv preprint arXiv:2110.05948},
	year={2021}
}

@article{xie2023diffusion,
	title={Diffusion model for generative image denoising},
	author={Xie, Yutong and Yuan, Minne and Dong, Bin and Li, Quanzheng},
	journal={arXiv preprint arXiv:2302.02398},
	year={2023}
}

@ARTICLE{10109106,
  author={Perera, Malsha V. and Nair, Nithin Gopalakrishnan and Bandara, Wele Gedara Chaminda and Patel, Vishal M.},
  journal={IEEE Geoscience and Remote Sensing Letters}, 
  title={SAR Despeckling Using a Denoising Diffusion Probabilistic Model}, 
  year={2023},
  volume={20},
  number={},
  pages={1-5},
  doi={10.1109/LGRS.2023.3270799}}

@article{guha2023sddpm,
	title={Sddpm: Speckle denoising diffusion probabilistic models},
	author={Guha, Soumee and Acton, Scott T},
	journal={arXiv preprint arXiv:2311.10868},
	year={2023}
}

@article{heo2026self,
	title={Self-Supervised Score-Based Despeckling for SAR Imagery via Log-Domain Transformation},
	author={Heo, Junhyuk},
	journal={arXiv preprint arXiv:2601.14334},
	year={2026}
}

@ARTICLE{sar2sar,
  author={Dalsasso, Emanuele and Denis, Loïc and Tupin, Florence},
  journal={IEEE Journal of Selected Topics in Applied Earth Observations and Remote Sensing}, 
  title={SAR2SAR: A Semi-Supervised Despeckling Algorithm for SAR Images}, 
  year={2021},
  volume={14},
  number={},
  pages={4321-4329},
  doi={10.1109/JSTARS.2021.3071864}
  }

@article{dalsasso2021if,
	title={As if by magic: Self-supervised training of deep despeckling networks with MERLIN},
	author={Dalsasso, Emanuele and Denis, Lo{\"\i}c and Tupin, Florence},
	journal={IEEE Transactions on Geoscience and Remote Sensing},
	volume={60},
	pages={1--13},
	year={2021},
	publisher={IEEE}
}

@article{molini2021speckle2void,
	title={Speckle2Void: Deep self-supervised SAR despeckling with blind-spot convolutional neural networks},
	author={Molini, Andrea Bordone and Valsesia, Diego and Fracastoro, Giulia and Magli, Enrico},
	journal={IEEE Transactions on Geoscience and Remote Sensing},
	volume={60},
	pages={1--17},
	year={2021},
	publisher={IEEE}
}

@ARTICLE{Lee1980,
  author={Lee, Jong-Sen},
  journal={IEEE Transactions on Pattern Analysis and Machine Intelligence}, 
  title={Digital Image Enhancement and Noise Filtering by Use of Local Statistics}, 
  year={1980},
  volume={PAMI-2},
  number={2},
  pages={165-168},
  doi={10.1109/TPAMI.1980.4766994}
  }

@ARTICLE{Frost1982,
  author={Frost, Victor S. and Stiles, Josephine Abbott and Shanmugan, K. S. and Holtzman, Julian C.},
  journal={IEEE Transactions on Pattern Analysis and Machine Intelligence}, 
  title={A Model for Radar Images and Its Application to Adaptive Digital Filtering of Multiplicative Noise}, 
  year={1982},
  volume={PAMI-4},
  number={2},
  pages={157-166},
  doi={10.1109/TPAMI.1982.4767223}}

@ARTICLE{Parrilli2012,
  author={Parrilli, Sara and Poderico, Mariana and Angelino, Cesario Vincenzo and Verdoliva, Luisa},
  journal={IEEE Transactions on Geoscience and Remote Sensing}, 
  title={A Nonlocal SAR Image Denoising Algorithm Based on LLMMSE Wavelet Shrinkage}, 
  year={2012},
  volume={50},
  number={2},
  pages={606-616},
  doi={10.1109/TGRS.2011.2161586}}

@ARTICLE{Deledalle2015,
  author={Deledalle, Charles-Alban and Denis, Loïc and Tupin, Florence and Reigber, Andreas and Jäger, Marc},
  journal={IEEE Transactions on Geoscience and Remote Sensing}, 
  title={NL-SAR: A Unified Nonlocal Framework for Resolution-Preserving (Pol)(In)SAR Denoising}, 
  year={2015},
  volume={53},
  number={4},
  pages={2021-2038},
  doi={10.1109/TGRS.2014.2352555}}

@ARTICLE{Mulog2017,
  author={Deledalle, Charles-Alban and Denis, Loïc and Tabti, Sonia and Tupin, Florence},
  journal={IEEE Transactions on Image Processing}, 
  title={MuLoG, or How to Apply Gaussian Denoisers to Multi-Channel SAR Speckle Reduction?}, 
  year={2017},
  volume={26},
  number={9},
  pages={4389-4403},
  doi={10.1109/TIP.2017.2713946}}

@INPROCEEDINGS{sarcnn,
  author={Chierchia, G. and Cozzolino, D. and Poggi, G. and Verdoliva, L.},
  booktitle={2017 IEEE International Geoscience and Remote Sensing Symposium (IGARSS)}, 
  title={SAR image despeckling through convolutional neural networks}, 
  year={2017},
  volume={},
  number={},
  pages={5438-5441},
  doi={10.1109/IGARSS.2017.8128234}}

@ARTICLE{idcnn,
  author={Wang, Puyang and Zhang, He and Patel, Vishal M.},
  journal={IEEE Signal Processing Letters}, 
  title={SAR Image Despeckling Using a Convolutional Neural Network}, 
  year={2017},
  volume={24},
  number={12},
  pages={1763-1767},
  doi={10.1109/LSP.2017.2758203}}

@article{hu2024sar,
  title={Sar despeckling via log-yeo-johnson transformation and sparse representation},
  author={Hu, Xuran and Zhu, Mingzhe and Stankovi{\'c}, Djordje and Feng, Zhenpeng and Mao, Shouhan and Stankovi{\'c}, Ljubi{\v{s}}a},
  journal={arXiv preprint arXiv:2412.18121},
  year={2024}
}

@ARTICLE{4359947,
  author={Coupe, Pierrick and Yger, Pierre and Prima, Sylvain and Hellier, Pierre and Kervrann, Charles and Barillot, Christian},
  journal={IEEE Transactions on Medical Imaging}, 
  title={An Optimized Blockwise Nonlocal Means Denoising Filter for 3-D Magnetic Resonance Images}, 
  year={2008},
  volume={27},
  number={4},
  pages={425-441},
  doi={10.1109/TMI.2007.906087}}

@InProceedings{Ulyanov_2018_CVPR,
author = {Ulyanov, Dmitry and Vedaldi, Andrea and Lempitsky, Victor},
title = {Deep Image Prior},
booktitle = {Proceedings of the IEEE Conference on Computer Vision and Pattern Recognition (CVPR)},
month = {June},
year = {2018}
}

@ARTICLE{7839189,
  author={Zhang, Kai and Zuo, Wangmeng and Chen, Yunjin and Meng, Deyu and Zhang, Lei},
  journal={IEEE Transactions on Image Processing}, 
  title={Beyond a Gaussian Denoiser: Residual Learning of Deep CNN for Image Denoising}, 
  year={2017},
  volume={26},
  number={7},
  pages={3142-3155},
  doi={10.1109/TIP.2017.2662206}}

@article{ko2021sar,
  title={SAR image despeckling using continuous attention module},
  author={Ko, Jaekyun and Lee, Sanghwan},
  journal={IEEE Journal of Selected Topics in Applied Earth Observations and Remote Sensing},
  volume={15},
  pages={3--19},
  year={2021},
  publisher={IEEE}
}

@ARTICLE{9755131,
  author={Thakur, Ramesh Kumar and Maji, Suman Kumar},
  journal={IEEE Geoscience and Remote Sensing Letters}, 
  title={AGSDNet: Attention and Gradient-Based SAR Denoising Network}, 
  year={2022},
  volume={19},
  number={},
  pages={1-5},
  doi={10.1109/LGRS.2022.3166565}}

@INPROCEEDINGS{9883415,
  author={Thakur Ramesh Kumar and Maji Suman Kumar},
  booktitle={IGARSS 2022 - 2022 IEEE International Geoscience and Remote Sensing Symposium}, 
  title={SIFSDNet: Sharp Image Feature Based SAR Denoising Network}, 
  year={2022},
  volume={},
  number={},
  pages={3428-3431},
  doi={10.1109/IGARSS46834.2022.9883415}}

@INPROCEEDINGS{9324183,
  author={Molini, Andrea Bordone and Valsesia, Diego and Fracastoro, Giulia and Magli, Enrico},
  booktitle={IGARSS 2020 - 2020 IEEE International Geoscience and Remote Sensing Symposium}, 
  title={Towards Deep Unsupervised Sar Despeckling with Blind-Spot Convolutional Neural Networks}, 
  year={2020},
  volume={},
  number={},
  pages={2507-2510},
  doi={10.1109/IGARSS39084.2020.9324183}
  }

@INPROCEEDINGS{9884596,
  author={Perera, Malsha V. and Bandara, Wele Gedara Chaminda and Valanarasu, Jeya Maria Jose and Patel, Vishal M.},
  booktitle={IGARSS 2022 - 2022 IEEE International Geoscience and Remote Sensing Symposium}, 
  title={Transformer-Based SAR Image Despeckling}, 
  year={2022},
  volume={},
  number={},
  pages={751-754},
  doi={10.1109/IGARSS46834.2022.9884596}}

@article{FANG2024376,
title = {Contrastive learning for real SAR image despeckling},
journal = {ISPRS Journal of Photogrammetry and Remote Sensing},
volume = {218},
pages = {376-391},
year = {2024},
issn = {0924-2716},
author = {Yangtian Fang and Rui Liu and Yini Peng and Jianjun Guan and Duidui Li and Xin Tian},
}

@article{albisani2025self,
  title={Self-supervised SAR despeckling using deep image prior},
  author={Albisani, Chiara and Baracchi, Daniele and Piva, Alessandro and Argenti, Fabrizio},
  journal={Pattern Recognition Letters},
  volume={190},
  pages={169--176},
  year={2025},
  publisher={Elsevier}
}

@INPROCEEDINGS{10641283,
  author={Hu, Xuran and Xu, Ziqiang and Chen, Zhihan and Feng, Zhenpeng and Zhu, Mingzhe and Stanković, Ljubiša},
  booktitle={IGARSS 2024 - 2024 IEEE International Geoscience and Remote Sensing Symposium}, 
  title={SAR Despeckling Via Regional Denoising Diffusion Probabilistic Model}, 
  year={2024},
  volume={},
  number={},
  pages={7226-7230},
  doi={10.1109/IGARSS53475.2024.10641283}}

@ARTICLE{Jose2010,
  author={Bioucas-Dias, José M. and Figueiredo, Mário A. T.},
  journal={IEEE Transactions on Image Processing}, 
  title={Multiplicative Noise Removal Using Variable Splitting and Constrained Optimization}, 
  year={2010},
  volume={19},
  number={7},
  pages={1720-1730},
  doi={10.1109/TIP.2010.2045029}
  }

@ARTICLE{DTFAD,
  author={Wei, Jiali and Liao, Xiaofeng},
  journal={IEEE Transactions on Image Processing}, 
  title={Dynamical Threshold-Based Fractional Anisotropic Diffusion for Speckle Noise Removal}, 
  year={2025},
  volume={34},
  number={},
  pages={2826-2839},
  doi={10.1109/TIP.2025.3561685}}

@article{GAO20241,
title = {Fractional-order cross-diffusion system for multiplicative noise removal},
journal = {Computers \& Mathematics with Applications},
volume = {164},
pages = {1-11},
year = {2024},
issn = {0898-1221},
author = {Juanjuan Gao and Jiebao Sun and Shengzhu Shi},
}

@ARTICLE{Lu2025DiSpeckle,
  author={Lu, Danwei and Liu, Chao and Yin, Junjun and Yang, Jian},
  journal={IEEE Transactions on Geoscience and Remote Sensing}, 
  title={DiSpeckle: Diffusion Model That Unwinds Speckle Formation With Off-the-Shelf Gaussian Denoisers}, 
  year={2025},
  volume={63},
  number={},
  pages={1-22},
  doi={10.1109/TGRS.2025.3630133}}

@article{Ran2026Tunable,
title = {A tunable despeckling neural network stabilized via diffusion equation},
journal = {Signal Processing},
volume = {239},
pages = {110324},
year = {2026},
issn = {0165-1684},
author = {Yi Ran and Zhichang Guo and Jia Li and Yao Li and Martin Burger and Boying Wu},
}

@ARTICLE{Deledalle2009PPB,
  author={Deledalle, Charles-Alban and Denis, LoÏc and Tupin, Florence},
  journal={IEEE Transactions on Image Processing}, 
  title={Iterative Weighted Maximum Likelihood Denoising With Probabilistic Patch-Based Weights}, 
  year={2009},
  volume={18},
  number={12},
  pages={2661-2672},
  doi={10.1109/TIP.2009.2029593}}

@ARTICLE{Woo2012Speckle,
  author={Woo, Hyenkyun and Yun, Sangwoon},
  journal={IEEE Transactions on Image Processing}, 
  title={Alternating Minimization Algorithm for Speckle Reduction With a Shifting Technique}, 
  year={2012},
  volume={21},
  number={4},
  pages={1701-1714},
  doi={10.1109/TIP.2011.2176345}}

@inproceedings{Okhotin2023SSDDPM,
 author = {Okhotin, Andrey and Molchanov, Dmitry and Vladimir, Arkhipkin and Bartosh, Grigory and Ohanesian, Viktor and Alanov, Aibek and Vetrov, Dmitry},
 booktitle = {Advances in Neural Information Processing Systems},
 editor = {A. Oh and T. Naumann and A. Globerson and K. Saenko and M. Hardt and S. Levine},
 pages = {10038--10067},
 publisher = {Curran Associates, Inc.},
 title = {Star-Shaped Denoising Diffusion Probabilistic Models},
 volume = {36},
 year = {2023}
}

@misc{Kim2024CTM,
      title={Consistency Trajectory Models: Learning Probability Flow ODE Trajectory of Diffusion}, 
      author={Dongjun Kim and Chieh-Hsin Lai and Wei-Hsiang Liao and Naoki Murata and Yuhta Takida and Toshimitsu Uesaka and Yutong He and Yuki Mitsufuji and Stefano Ermon},
      year={2024},
      eprint={2310.02279},
      archivePrefix={arXiv},
      primaryClass={cs.LG},
}

@ARTICLE{Bo2025SDUD,
  author={Bo, Fuyu and Ma, Xiaole and Hu, Shaohai and An, Gaoyun and Li, Yidong and Cen, Yigang},
  journal={IEEE Journal of Selected Topics in Applied Earth Observations and Remote Sensing}, 
  title={Speckle-Driven Unsupervised Despeckling for SAR Images}, 
  year={2025},
  volume={18},
  number={},
  pages={13023-13034},
  doi={10.1109/JSTARS.2025.3568854}}

@ARTICLE{Cai2026DAPhysDiff,
  author={Cai, Xingquan and Wang, Luyao and Zhang, Yupeng and Wang, Meirui and Li, Ying},
  journal={IEEE Geoscience and Remote Sensing Letters}, 
  title={DA-PhysDiff: A Dual-Aware Physics-Informed Diffusion Model for SAR Image Despeckling}, 
  year={2026},
  volume={23},
  number={},
  pages={1-5},
  doi={10.1109/LGRS.2026.3652373}}

@inproceedings{Zheng2025DBIM,
 author = {Zheng, Kaiwen and He, Guande and Chen, Jianfei and Bao, Fan and Zhu, Jun},
 booktitle = {International Conference on Learning Representations},
 editor = {Y. Yue and A. Garg and N. Peng and F. Sha and R. Yu},
 pages = {81857--81884},
 title = {Diffusion Bridge Implicit Models},

 volume = {2025},
 year = {2025}
}

@CONFERENCE{Kim2025GCTM,
	author = {Kim, Beomsu and Kim, Jaemin and Kim, Jeongsol and Ye, Jong Chul},
	title = {GENERALIZED CONSISTENCY TRAJECTORY MODELS FOR IMAGE MANIPULATION},
	year = {2025},
	journal = {13th International Conference on Learning Representations, ICLR 2025},
	pages = {101753 – 101775},
	type = {Conference paper},
	source = {Scopus}
}

@inproceedings{wang2026residual,
	title={Residual diffusion bridge model for image restoration},
	author={Wang, Hebaixu and Zhang, Jing and Chen, Haoyang and Guo, Haonan and Wang, Di and Ma, Jiayi and Du, Bo},
	booktitle={Proceedings of the IEEE/CVF Conference on Computer Vision and Pattern Recognition},
	pages={8375--8386},
	year={2026}
}
	
	\vfill
	
\end{document}